\documentclass[dvipsnames]{article} %
\usepackage{colm2024_conference}

\usepackage{booktabs}
\usepackage{enumitem}
\usepackage{wrapfig}
\usepackage{graphicx}
\usepackage[misc]{ifsym}
\usepackage{multicol} 
\usepackage{microtype}
\usepackage{amsmath}
\usepackage{colortbl}
\usepackage[utf8]{inputenc}
\usepackage[T1]{fontenc}
\definecolor{lightgray}{rgb}{0.9,0.9,0.9}
\usepackage{caption}
\usepackage{subcaption}
\usepackage{graphicx}
\usepackage{subcaption}
\usepackage{setspace}
\usepackage{url}
\usepackage{multirow}
\usepackage{tabularx}
\usepackage{blindtext}
\usepackage{pgfplots}
\pgfplotsset{compat=1.18} 
\usepackage{tikz}
\usetikzlibrary{er,positioning,bayesnet}
\usepackage{makecell}
\usepackage{tipa}
\usepackage{siunitx}
\usepackage{nicefrac}
\usepackage{listings}
\usepackage[raster,skins, most]{tcolorbox} %
\usepackage{xltabular}
\usepackage{adjustbox}
\usepackage{xurl}
\usepackage{rotating}
\usepackage[normalem]{ulem}

\usepackage{fontawesome}

\usepackage[table]{xcolor}

\useunder{\uline}{\ul}{}

\newcommand*\justify{%
  \fontdimen2\font=0.4em
  \fontdimen3\font=0.2em
  \fontdimen4\font=0.1em
  \fontdimen7\font=0.1em
  \hyphenchar\font=`\-
}

\renewcommand{\texttt}[1]{%
  \begingroup
  \ttfamily
  \begingroup\lccode`~=`/\lowercase{\endgroup\def~}{/\discretionary{}{}{}}%
  \begingroup\lccode`~=`[\lowercase{\endgroup\def~}{[\discretionary{}{}{}}%
  \begingroup\lccode`~=`.\lowercase{\endgroup\def~}{.\discretionary{}{}{}}%
  \catcode`/=\active\catcode`[=\active\catcode`.=\active
  \justify\scantokens{#1\noexpand}%
  \endgroup
}

\usepackage{makecell}
\usetikzlibrary{tikzmark}
\makeatletter
\newcommand*\myfontsize{%
  \@setfontsize\myfontsize{7}{8}%
}
\makeatother

\definecolor{uclablue}{RGB}{159, 195, 224}

\definecolor{uclagold}{RGB}{255, 240, 180}

\definecolor{aliceblue}{RGB}{255, 238, 241}

\definecolor{cadmiumgreen}{rgb}{0.0, 0.42, 0.24}

\definecolor{myred}{rgb}{0.7, 0.3, 0.0}
\definecolor{myblue}{rgb}{0.2, 0.3, 0.6}
\definecolor{babygreen}{rgb}{0.85, 0.97, 0.85}

\definecolor{purple1}{RGB}{126, 107, 196}
\definecolor{purple2}{RGB}{199, 158, 207}
\definecolor{purple3}{RGB}{214, 200, 255}
\definecolor{purple4}{RGB}{254, 240, 255}

\definecolor{deepblue}{RGB}{48, 58, 82}

\newcommand{\symboletongyi}{\raisebox{0pt}{~\includegraphics[scale=0.012]{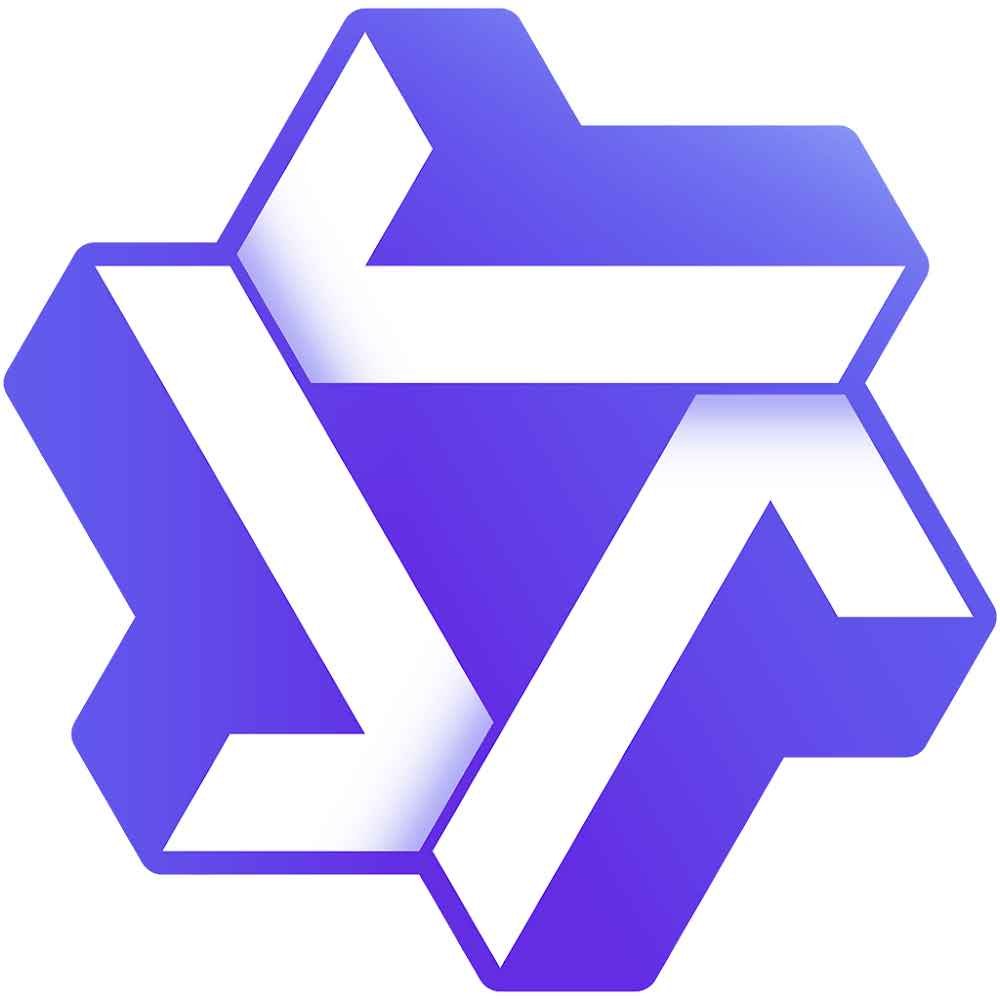}}~}

\definecolor{deepPurple}{HTML}{330066}

\definecolor{uclablue_old}{rgb}{0.15, 0.45, 0.68}
\hypersetup{
    breaklinks,
    citecolor=uclablue_old,
    colorlinks=true,
}

\newtcolorbox{mybox}[2][]
  {colback = black!5!white, colframe = black!75!black, fonttitle = \bfseries,
    colbacktitle = black!100!black, enhanced, before upper={\fontsize{8}{11}\obeyspaces\obeylines\selectfont}, fontupper=\selectfont,
    attach boxed title to top left={yshift=-2.2mm,xshift=4mm},
    title=#2,#1}

\usepackage{graphicx}

\usepackage{amsmath}
\usepackage{amssymb}
\usepackage{mathtools}
\usepackage{amsthm}

\usepackage[most]{tcolorbox}  

\usepackage{booktabs}
\usepackage{caption}
\usepackage{makecell}

\theoremstyle{plain}
\newtheorem{theorem}{Theorem}[section]

\newtheorem{lemma}[theorem]{Lemma}

\theoremstyle{definition}

\theoremstyle{remark}

\newtheorem*{remark*}{Remark}

\usepackage{multirow}
\usepackage[dvipsnames]{xcolor}
\usepackage[table]{xcolor}
\usepackage[normalem]{ulem}
\usepackage{pifont}
\useunder{\uline}{\ul}{}

\usepackage{wrapfig}
\usepackage{lipsum}
\usepackage{array}
\usepackage{tabulary}
\usepackage{tabularx}
\usepackage{listings}
\usepackage{enumitem}
\usepackage{algorithm}
\usepackage{algorithmic}

\lstdefinestyle{mystyle}{
    basicstyle=\ttfamily\footnotesize,
    breaklines=true, 
    columns=flexible, 
    captionpos=b,
}
\usepackage{subcaption}

\newcommand{\ie}{\emph{i.e., }}
\newcommand{\eg}{\emph{e.g., }}

\newcommand{\st}{\emph{s.t. }}

\newcommand{\cf}{\emph{cf. }}

\newcommand{\updated}[1]{#1}  

\title{%
\begin{tabular}[t]{l} 
  \parbox[t]{0.8\textwidth}{\centering 
    On the Direction of RLVR Updates for LLM Reasoning: Identification and Exploitation
  }
\end{tabular}
}

\author{%
\large \symboletongyi Qwen Pilot Team, Alibaba Group\thanks{Full author list available in the \hyperref[sec:contribution]{Contributions} section.}%
  \\[1em] 
}

\begin{document}
\maketitle

\begin{abstract}
Reinforcement learning with verifiable rewards (RLVR) has substantially improved the reasoning capabilities of large language models. 
While existing analyses identify that RLVR-induced changes are sparse, they primarily focus on the \textbf{magnitude} of these updates, largely overlooking their \textbf{direction}. 
In this work, we argue that the direction of updates is a more critical lens for understanding RLVR's effects, which can be captured by the signed, token-level log probability difference $\Delta\log p$ between the base and final RLVR models.
Through statistical analysis and token-replacement interventions, we demonstrate that $\Delta\log p$ more effectively identifies sparse, yet reasoning-critical updates than magnitude-based metrics (\eg divergence or entropy).
Building on this insight, we propose two practical applications:
(1) a \textit{test-time extrapolation} method that amplifies the policy along the learned $\Delta\log p$ direction to improve reasoning accuracy without further training;
(2) a \textit{training-time reweighting} method that focuses learning on low-probability (corresponding to higher $\Delta\log p$) tokens, which improves reasoning performance across models and benchmarks.
Our work establishes the direction of change as a key principle for analyzing and improving RLVR.

\end{abstract}

\section{Introduction}

Recent advances have substantially improved the reasoning capabilities of large language models, giving rise to powerful reasoning-centric models such as OpenAI o1 \citep{Openai-O1}, Deepseek R1 \citep{Deepseek-R1}, Gemini 2.5 \citep{Gemini-2.5}, and Qwen3 \citep{Qwen3}.
A key algorithmic driver of this progress is reinforcement learning with verifiable rewards (RLVR) \citep{Deepseek-R1, kimi-k1.5, Qwen3}, 
which fine-tunes a model's generation policy using feedback from task-specific verifiers,
thereby eliciting and amplifying the reasoning ability.

To elucidate how RLVR confers its gains, a natural lens is to compare what changes in the final RL-trained model $\pi_\mathrm{RL}$ relative to its base counterpart $\pi_\mathrm{Base}$ \citep{iclr25_dynamics}.
Previous analyses have consistently shown that the RLVR-induced changes are sparse, impacting only a small subset of tokens in the output sequence. 
For example, \citet{entropy_28_rule} associate these changes with high-entropy tokens, \citet{UniReason} corroborate the sparsity by measuring the KL divergence between $\pi_\mathrm{Base}$ and $\pi_\mathrm{RL}$, 
while \citet{over_dominate} and \citet{decomposing_zhaoxin} attribute this sparsity to selective gradient updates during RLVR training.
\updated{However, when studying the difference between base and RLVR models, prior studies primarily emphasize the \textbf{magnitude of change}, but largely overlook the \textbf{direction} in their distributions.}
As shown in Fig.~\ref{fig:teaser}\textcolor{red}{\hyperref[fig:teaser]{(b)}}, 
magnitude-based metrics (\eg entropy, KL divergence) yield nearly identical histograms for the base and final RLVR models,
indicating that magnitude alone is insufficient to characterize the transformation from $\pi_\mathrm{Base}$ to $\pi_\mathrm{RL}$.

To address this gap, we directly quantify directional shifts in the model’s distribution using the signed, token-level log-probability difference:
\begin{equation}\label{eqn:logp_diff}
    \Delta\log p(y_t|x,y_{<t}) = \log\pi_\mathrm{RL}(y_t|x,y_{<t}) - \log \pi_\mathrm{Base}(y_t|x,y_{<t}),
\end{equation}
which captures how RLVR shifts the probability mass on each token, with positive values indicating increased probabilities and negative values vice versa.
As shown in Fig.~\ref{fig:teaser}\textcolor{red}{\hyperref[fig:teaser]{(b)}}, histograms of $\Delta\log p$ exhibit a clear bimodal pattern with two distinct tails, highlighting a clear directional signature absent in magnitude-based metrics.
This metric can reveal which token RLVR prioritizes, such as reasoning-critical tokens (\eg those enhancing reasoning correctness) versus irrelevant ones.
We further validate its utility via a token replacement intervention \citep{crosssample}: for each metric, we identify salient positions and replace the base model's tokens with the RLVR model’s choices at those positions during generation \updated{(\cf Algo.~\ref{alg:replace})}.
As shown in Fig.~\ref{fig:teaser}\textcolor{red}{\hyperref[fig:teaser]{(c)}}, selecting by $\Delta\log p$ reaches RLVR-level performance with the fewest substitutions, pinpointing tokens where RLVR learns reasoning-critical updates. 
These findings underscore a key principle: \textit{analyzing the direction of changes, rather than solely their magnitude, provides deeper insights.}
The signed log-probability difference provides a practical and effective handle for this diagnostic analysis.

\begin{figure}[t]
    \centering
    \includegraphics[width=0.95\linewidth]{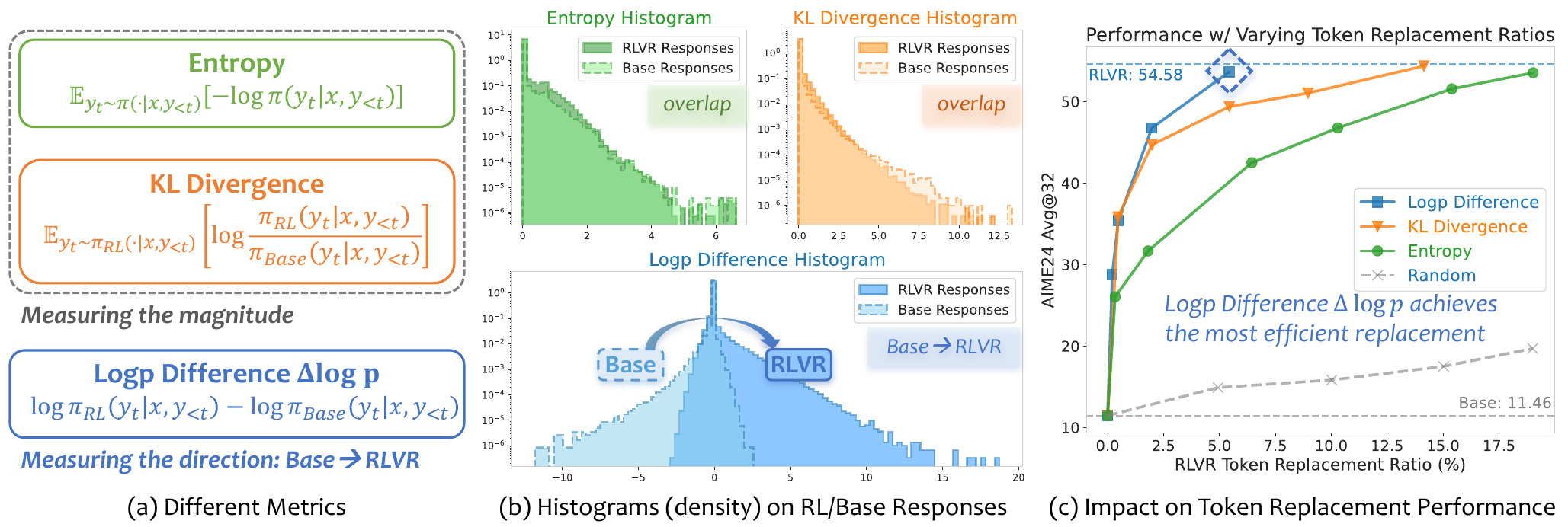}
    \caption{(a) Token-level metrics for analyzing RLVR updates. (b) Histograms of each metric on responses generated by base and RLVR models. With a log-scale y-axis, most values concentrate near zero for all metrics, but only $\Delta \log p$ shows a directional shift distinguishing RLVR from the base model. (c) Token‑replacement performance: replacing base tokens with RLVR choices at positions selected by each metric, where $\Delta \log p$ recovers RLVR performance with the fewest replacements.}
    \label{fig:teaser}
    \vspace{-8pt}
\end{figure}

Building on this principle, we first propose a test-time augmentation that extrapolates the RLVR policy’s distribution along the $\Delta\log p$ direction for reasoning-critical tokens selectively, amplifying reasoning-related updates and improving accuracy without additional training. 
Furthermore, we observe that tokens with the largest $\Delta\log p$ consistently correspond to low-probability tokens during RLVR training.
Motivated by this, we design a probability-aware reweighting of policy-gradient advantages, upweighting contributions from low-probability tokens to focus learning on reasoning-critical positions as $\Delta\log p$ indicated.
This reweighting yields additional gains over current state-of-the-art RLVR methods (\eg DAPO \citep{DAPO}) across diverse benchmarks and models.

In summary, this work introduces a directional diagnostic for analyzing RLVR's effects and, based on these findings, develops two practical strategies for reasoning enhancement: a test-time extrapolation technique and a training-time reweighting method. 
We hope our work offers a new perspective for analyzing and improving RLVR through the lens of update direction.

\vspace{-3pt}
\section{Preliminaries}
\vspace{-3pt}

\textbf{Group Relative Policy Optimization (GRPO).} GRPO \citep{GRPO} is a variant of the milestone policy gradient algorithm PPO \citep{PPO}. It is adapted for LLM training by eliminating the need for a separate critic model.
For each QA pair $(x, a)$ sampled from dataset $\mathcal D$, GRPO generates a group of $G$ responses $\{y_i\}_{i=1}^G$ using the old policy $\pi_{\theta_\text{old}}$, computes their rewards $\{R_i\}_{i=1}^G$, and estimates the advantage of each response in a group-relative manner:
\begin{equation}
    \hat A_{i,t} = \frac{R_i - \mathrm{mean}(\{R_i\}_{i=1}^G)}{\mathrm{std}(\{R_i\}_{i=1}^G)}.
\end{equation}
Then the policy $\pi_\theta$ is optimized by maximizing the following objective:
\begin{equation}\label{eqn:GRPO}
\begin{aligned}
    \mathcal J_\text{GRPO}(\theta) = &\mathbb E_{(x,a)\sim\mathcal{D}, \{y_i\}_{i=1}^G \sim\pi_{\theta_{\text{old}}}(\cdot|x)}\bigg[
     \frac1G\sum_{i=1}^G\frac{1}{|y_i|}\sum_{t=1}^{|y_i|}\min \Bigl( r_{i,t}(\theta)\hat{A}_{i,t}, \\
    & \text{clip}\bigl(r_{i,t}(\theta), 1-\epsilon, 1+\epsilon\bigr)\hat{A}_{i,t} \Bigr) - \beta\mathbb D_\mathrm{KL}(\pi_\theta||\pi_\mathrm{ref}) \bigg],
\end{aligned}
\end{equation}
where $r_{i,t}(\theta) = \frac{\pi_{\theta}(y_{i,t}|x, y_{i,<t})}{\pi_{\theta_{\text{old}}}(y_{i,t}|x, y_{i,<t})}$ is the importance sampling ratio, $\epsilon$ is the clipping range for $r_{i,t}(\theta)$, and $\mathbb D_\mathrm{KL}(\pi_\theta||\pi_\mathrm{ref})$ regularizes the policy to stay close to a reference policy $\pi_\mathrm{ref}$.

\textbf{Dynamic Sampling Policy Optimization (DAPO).} DAPO\citep{DAPO} is a state-of-the-art critic-free RLVR algorithm that further refines GRPO. It introduces several techniques, including clip-higher mechanism, dynamic sampling strategy, token-level loss aggregation, overlong punishment, and removing the KL penalty. DAPO's objective is defined as:
\begin{equation}\label{eqn:DAPO}
\begin{aligned}
    \mathcal{J}_{\text{DAPO}}(\theta) = &\mathbb{E}_{(x,a)\sim\mathcal{D}, \{y_i\}_{i=1}^G \sim\pi_{\theta_{\text{old}}}(\cdot|x)} \bigg[
     \frac{1}{\sum_{i=1}^G |y_i|} \sum_{i=1}^G \sum_{t=1}^{|y_i|} \min \left( r_{i,t}(\theta)\hat{A}_{i,t}, \right. \\
    & \left. \text{clip}\bigl(r_{i,t}(\theta), 1-\epsilon_{\text{low}}, 1+\epsilon_{\text{high}}\bigr)\hat{A}_{i,t} \right) \bigg], \quad \st 0 < \left|\{y_i \mid \text{is\_equivalent}(a, y_i)\}\right| < G.
\end{aligned}
\end{equation}
Given its success, we adopt DAPO as the primary baseline algorithm for our empirical analysis.

\textbf{Token-level metrics for RLVR analysis.} To study how RLVR turns a base model into the RL-finetuned counterpart, we mainly compare the following token-level metrics for RLVR analysis:
\begin{itemize}[left=0pt, topsep=0pt]
\item Entropy: \citet{entropy_28_rule} observed that RLVR-induced changes are sparse and tend to concentrate on high-entropy tokens. This token-level entropy is defined as:
\begin{equation}\label{eqn:entropy}
    \mathcal H_\pi(\cdot|x,y_{<t}) = \mathbb E_{y_t\sim\pi(\cdot|x,y_{<t})}[-\log\pi(y_t|x,y_{<t})].
\end{equation}
We calculate this entropy for both the RLVR model $\mathcal H_{\pi_\mathrm{RL}}$ and the base model $\mathcal H_{\pi_\mathrm{Base}}$.
\item Divergences: \citet{UniReason} used KL Divergence to quantify the distributional shift, also finding that the changes are sparse.  The token-level KL divergence is defined as:
\begin{equation}\label{eqn:divergence}
    \mathbb D^\mathrm{KL}_{\pi_\mathrm{RL},\pi_\mathrm{Base}}(\cdot|x,y_{<t}) = \mathbb E_{y_t\sim\pi_\mathrm{RL}(\cdot|x,y_{<t})}\left[\log\frac{\pi_\mathrm{RL}(y_t|x,y_{<t})}{\pi_\mathrm{Base}(y_t|x,y_{<t})}\right].
\end{equation}
We also include its reversed variant  $\mathbb D^\mathrm{KL}_{\pi_\mathrm{Base},\pi_\mathrm{RL}}$ and the averaged KL Divergence $\mathbb D^\mathrm{KL} = \frac{1}{2}(\mathbb D^\mathrm{KL}_{\pi_\mathrm{RL},\pi_\mathrm{Base}} + \mathbb D^\mathrm{KL}_{\pi_\mathrm{Base},\pi_\mathrm{RL}})$ to avoid asymmetry bias for a comprehensive analysis.
\end{itemize}

\section{Dissecting the Token-Level Changes Introduced by RLVR}\label{sec:analysis}

This section aims to dissect the token-level mechanisms through which RLVR training transforms a base model into its fine-tuned counterpart.
First, we show that the logp difference ($\Delta\log p$, Eq.~\ref{eqn:logp_diff}) captures directional shifts in probability mass and separates base from RLVR generations, whereas magnitude-only metrics (entropy/divergence) do not.
Second, we conduct a token replacement experiment to validate that $\Delta\log p$ more precisely identifies sparse, reasoning-critical tokens targeted by RLVR.
Finally, we explain the sparsity through a gradient analysis showing that policy updates concentrate on low‑probability tokens of RLVR's policy gradient updates.

\subsection{Statistical Analysis: Directional vs. Magnitude-Based Metrics}

\textbf{Experimental Setup.}
We conduct a statistical analysis on outputs from several RLVR-base model pairs (ORZ \citep{ORZ}, DAPO \citep{DAPO}, UniReason \citep{UniReason}) to compare how different token-level metrics capture RLVR-induced changes. 
We plot histograms of entropy, divergences, and logp difference of different models' generated tokens on the AIME-24 dataset.

\textbf{Statistical Comparison.}
Fig.~\ref{fig:teaser}\textcolor{red}{\hyperref[fig:teaser]{(b)}} shows the distributions of these metrics for the UniReason model pair.
Across all metrics, the histograms are sharply peaked near zero (note the log‑scale y‑axis), confirming that RLVR‑induced changes are sparse.\footnote{\citet{entropy_28_rule} argue that RLVR primarily modifies tokens with high entropy. The observed concentration of near‑zero‑entropy tokens is therefore consistent with sparse updates under their assumptions.}
However, the entropy and KL divergence distributions are nearly identical for both the base and RLVR model outputs.
In contrast, the $\Delta\log p$ distribution exhibits two distinct tails: a positive tail corresponding to tokens favored by the RLVR model and a negative tail for the base model. 
This pattern holds across all tested model pairs and for multiple entropy/divergence variants (Appx.~\ref{appdix:stat}): the distributions of magnitude-based metrics are nearly indistinguishable between tokens generated by the RLVR and base models 
(Figs.~\ref{fig:hist_unireason_magnitudes}-\ref{fig:hist_orz_magnitudes}), whereas $\Delta\log p$ consistently exhibits clear bimodal patterns (Fig.~\ref{fig:hist_logp_diffs}).

This is because magnitude‑only metrics quantify the size of the distributional change but \textbf{ignore its direction,} \ie whether a given token is more favored by the RLVR model or the base model. 
With directional information, $\Delta\log p$ \textbf{reveals a clear difference between the two modes}, enabling more precise identification of the sparse, reasoning‑enhancing updates induced by RLVR, and we will validate their impact on reasoning performance in the following section.

\subsection{Recovering RLVR Performance via Selective Token Replacement}

\begin{figure}[t]
    \centering
    \includegraphics[width=\linewidth]{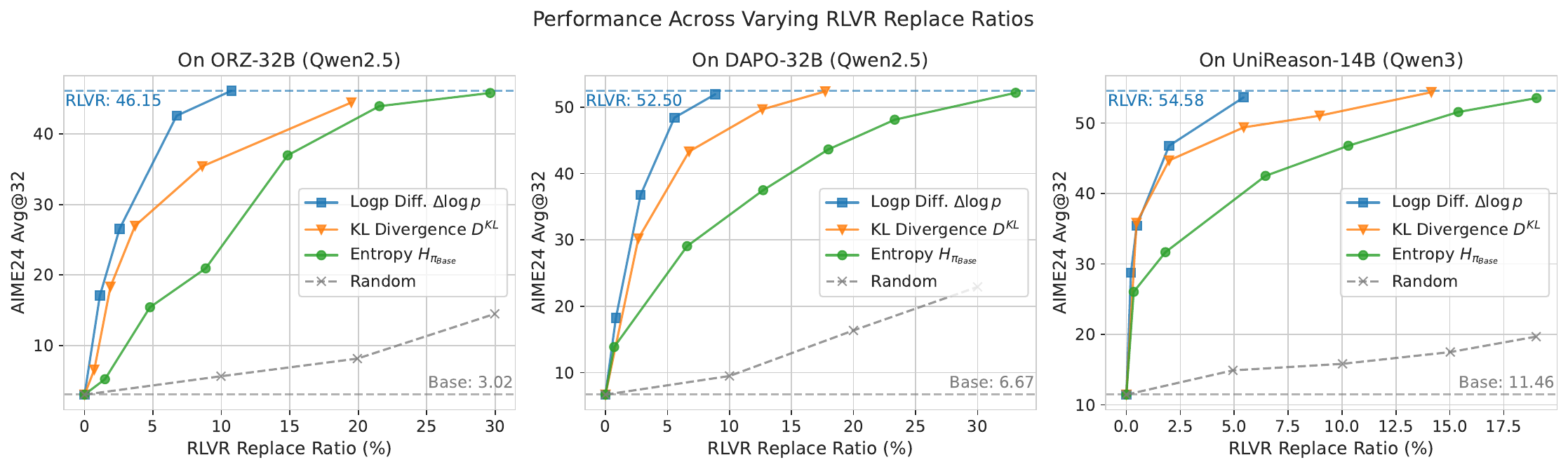}
    \caption{Token‑replacement performance across metrics and model pairs. 
    While all metrics can recover RLVR‑level accuracy, $\Delta\log p$ does so with \emph{the fewest replacements}, demonstrating its precision in isolating the reasoning-critical minor tokens changed by RLVR training.}
    \label{fig:replace}
\end{figure}

\textbf{Token Replacement Setup.}
To further assess how the minority tokens identified by each metric affect reasoning ability, we conduct a \emph{selective token replacement}\footnote{This follows the \texttt{cross-sample} experiment by \citet{crosssample}, which originally employs bidirectional token swapping to verify RL's sparsity. 
We use the term \emph{selective token replacement} to better reflect our specific setup: comparing how different metrics select base tokens to be replaced by $\pi_{\mathrm{RL}}$.}
experiment proposed by \citet{crosssample}.
At each decoding step, we sample a token from $\pi_\mathrm{Base}$, then apply a metric-specific criterion $f^\tau$ to decide whether to replace the token with one sampled from  $\pi_\mathrm{RL}$ (Alg.~\ref{alg:replace}). 
The threshold $\tau$ is adjusted to control replacement rates across metrics, enabling fair comparisons.

We compare entropy, KL Divergences\footnote{We mainly use the averaged KL divergence $\mathbb D^\mathrm{KL} = \frac12(\mathbb D^\mathrm{KL}_{\pi_\mathrm{RL},\pi_\mathrm{Base}} + \mathbb D^\mathrm{KL}_{\pi_\mathrm{Base},\pi_\mathrm{RL}})$ for token replacement to avoid potential asymmetry bias and include KL's variants $\mathbb D^\mathrm{KL}_{\pi_\mathrm{RL},\pi_\mathrm{Base}}$ and $\mathbb D^\mathrm{KL}_{\pi_\mathrm{Base},\pi_\mathrm{RL}}$ for ablation study.}, and logp difference, with the corresponding replacement criteria functions defined as follows:

\begin{wrapfigure}[13]{r}{0.48\textwidth}
\vspace{-23pt}
    \begin{minipage}{0.48\textwidth}
        \begin{algorithm}[H]
        \caption{Selective Token Replacement}
        \label{alg:replace}
        \begin{algorithmic}[1]
        \REQUIRE Base and RLVR models $\pi_\mathrm{Base}, \pi_\mathrm{RL}$, prompt $x$, criterion function $f^\tau(\cdot)\in\{0,1\}$
        
        \STATE Initialize response: $t \gets 0, y_{\le 0} \gets \text{``''}$
        
        \WHILE{$y_t \neq \text{``\texttt{<EOS>}''}$}
            \STATE $t \gets t + 1$ 
            \STATE \label{line:sample} Sample from base: $y_t \sim \pi_\mathrm{Base}(\cdot|x, y_{<t})$
            \IF{$f^\tau(y_t|x, y_{<t}) = 1$}
                \STATE \label{line:replace} Replace the token: $y_t \sim \pi_\mathrm{RL}(\cdot|x, y_{<t})$
            \ENDIF
        \ENDWHILE
        
        \RETURN $y_{\le t}$
        \end{algorithmic}
        \end{algorithm}
    \end{minipage}
\end{wrapfigure}

\begin{itemize}[left=0pt, topsep=0pt]
    \item Entropy: Following the hypothesis that RLVR updates target high-entropy positions \citep{entropy_28_rule}, we replace the base model's token if its token distribution has entropy exceeding a threshold $\tau$: $f_\mathcal H^\tau(y_t|x,y_{<t}) = \mathbb I(\mathcal H(\cdot|x,y_{<t})>\tau).$
    \item KL Divergences: Similarly, to target positions where the two models diverges most, we replace the token if the divergence is greater than $\tau$: $f_\mathbb D^\tau(y_t|x,y_{<t}) = \mathbb I\big(\mathbb D(\cdot|x,y_{<t})>\tau\big).$
    \item Logp Difference: A large negative $\Delta\log p$ for a token $y_t$ indicates that RLVR has learned to penalize it relative to the base model. We exploit this by replacing tokens whose logp difference falls \textit{below} a threshold $\tau$: $f_\mathrm{logp}^\tau(y_t|x,y_{<t}) = \mathbb I\big(\Delta\log p(y_t|x,y_{<t})<\tau\big).$
\end{itemize}

This selective replacement setup, controlled by the metric-specific thresholds, allows us to compare the impact of tokens identified by each metric on reasoning performance at matched replacement rates.
Fig.~\ref{fig:replace} shows results on AIME‑24 for three representative metrics $\mathcal H_{\pi_\mathrm{Base}}$, $\mathbb D^{\mathrm{KL}}$, and $\Delta\log p$, while Fig.~\ref{fig:kl_ablation} in Appx.~\ref{appdix:replace_abl} provides ablations with additional metrics, including the RLVR model's entropy $\mathcal H_{\pi_\mathrm{RL}}$ and KL‑divergence variants. 
All metrics are contrasted with a random baseline that uniformly replaces tokens: $f^\tau_\mathrm{rand}(\cdot) = \mathbb I_{\rho\sim U[0,1]}(\rho<\tau)$.
The key observations are as follows:

\textbf{Observation I: Selectively replacing a minority of base models' tokens can recover RLVR performance.}
As shown in Fig.~\ref{fig:replace}, replacing 5-30\% of a base model's sampled tokens with different metrics suffices to match the final RLVR model's accuracy.
In contrast, randomly replacing the tokens without metric selection produces much slower performance growth. 
This demonstrates that RLVR‑modified tokens are sparsely distributed along the sequence but disproportionately important for reasoning, highlighting the efficacy of the evaluated metrics in identifying these critical tokens.

\textbf{Observation II: Logp difference $>$ divergence $>$ entropy in identifying RLVR-learned reasoning patterns.}
Across all model pairs (Fig.~\ref{fig:replace}), $\Delta\log p$-based replacement reaches the RLVR model’s accuracy with the \emph{fewest} substitutions (around \emph{10\%} of tokens).
In comparison, magnitude-only metrics (\eg divergence and entropy) require clearly more replacement to match RLVR performance, indicating lower precision in identifying reasoning‑critical changes introduced by RLVR.
Between these two, divergence consistently outperforms entropy, suggesting that RLVR changes may not be restricted to high‑entropy positions.
This ordering---$\Delta \log p$ highest, followed by divergence, then entropy---remains stable across different divergence and entropy variants (Fig.~\ref{fig:kl_ablation} in Appx.~\ref{appdix:replace_abl}), further validating the superiority of logp difference in isolating the most influential positions.

\subsection{A Gradient-Based Explanation for the Sparse Updates}

\begin{figure}[t]
\centering
\begin{subfigure}[b]{0.38\textwidth}
    \centering
    \includegraphics[width=\linewidth]{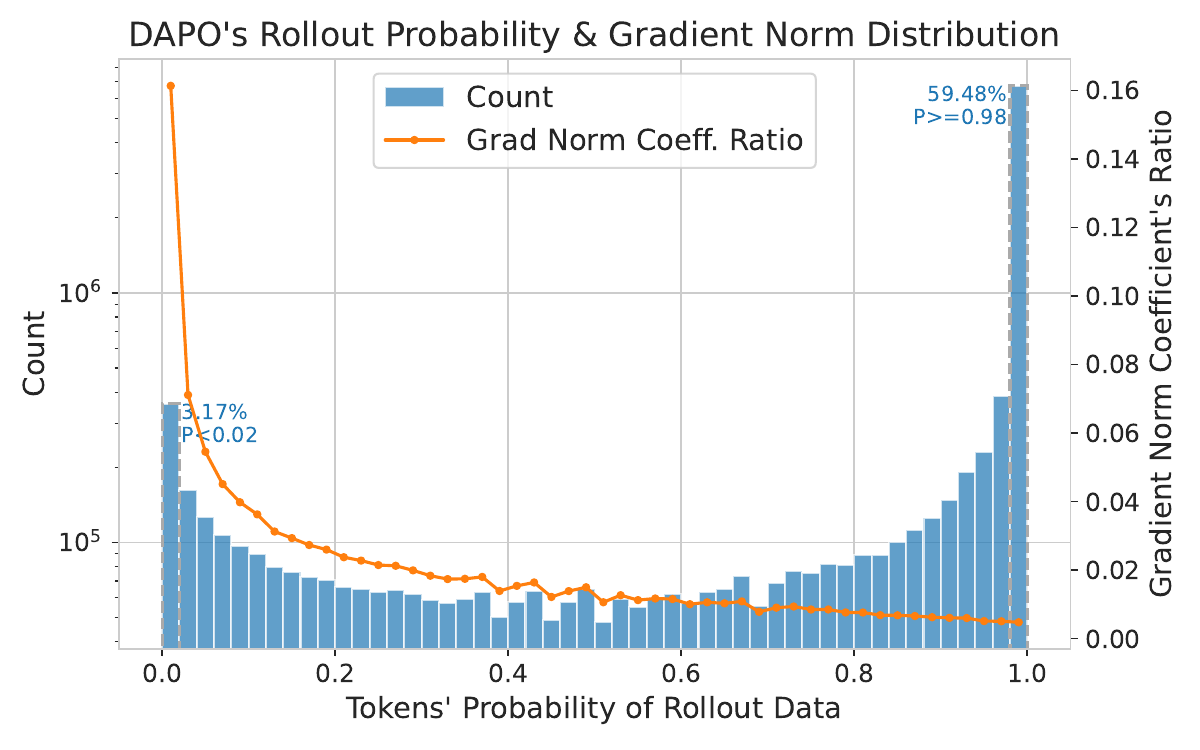}
    \caption{Gradient norm and probability}
\end{subfigure}
\hfill
\begin{subfigure}[b]{0.3\textwidth}
    \centering
    \includegraphics[width=\linewidth]{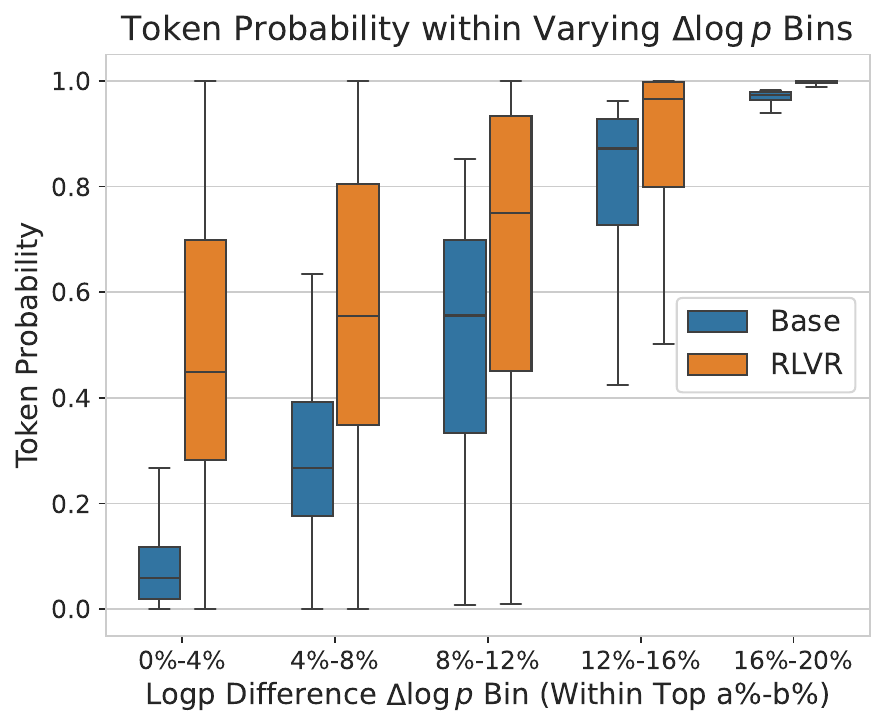}
    \caption{Token probability \textit{v.s.} $\Delta\log p$}
\end{subfigure}
\hfill
\begin{subfigure}[b]{0.3\textwidth}
    \centering
    \includegraphics[width=\linewidth]{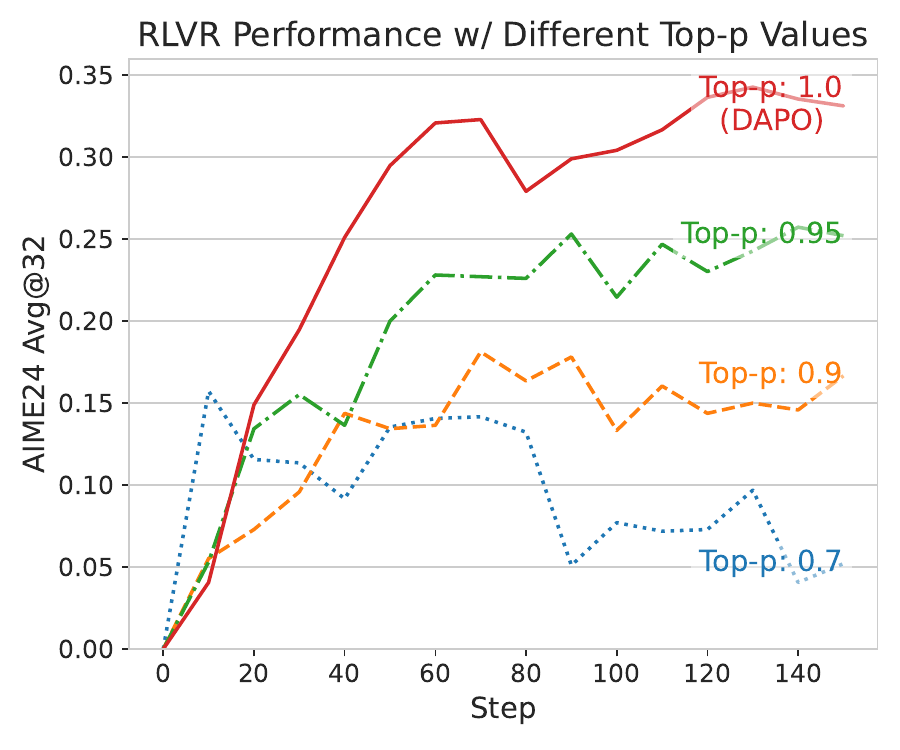}
    \caption{RLVR performance \textit{v.s.} top-p}
\end{subfigure}
\caption{(a) Token probability and gradient norm coefficient $1-\pi_\theta(\cdot)$ of a DAPO step, where the gradient concentrates on rare, low-probability tokens. (b) Token probability within different $\Delta\log p$ bins, where higher $\Delta\log p$ bins contain lower probability for both base and RLVR models. (c) Effect of top-p filtering on RLVR training performance. Performance declines with more filtering.}
\label{fig:probs}
\end{figure}

Our previous analysis established that the RLVR model differs from its base counterpart on a small but critical subset of tokens most effectively identified by $\Delta\log p$.
Here, we provide a gradient-based explanation for this sparsity of changes: RLVR's policy gradient inherently concentrates updates on rare, low-probability tokens, correlating with tokens with high $\Delta\log p$ in the final model.

\textbf{RLVR's policy gradient sparsely concentrates on low-probability tokens.}
The gradient of the DAPO objective $\mathcal J_\mathrm{DAPO}$ for an un-clipped token $y_{i,t}$ can be written as $w_{i,t}\cdot\nabla_\theta\log\pi_\theta(y_{i,t}|x,y_{i,<t})$, where $w_{i,t}=r_{i,t}(\theta)\hat A_{i,t}$ combines the importance sampling ratio and advantage. 
To analyze the token's gradient norm, we have the following lemma (see the proof in Appx.~\ref{proof:grad}):
\begin{lemma}\label{lemma:grad}
    For a softmax-parameterized LLM policy with logits vector $z$ for the output token $y_{i,t}$, the $\ell1$-norm of the DAPO objective's gradient w.r.t. $z$ is given by:
    $$
    \left\Vert\nabla_z\mathcal J_\mathrm{DAPO}(y_{i,t}|x,y_{i,<t})\right\Vert_1 = 2|w_{i,t}|\cdot\big(1-\pi_\theta(y_{i,t}|x,y_{i,<t})\big).
    $$
\end{lemma}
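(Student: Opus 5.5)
The plan is a direct computation of the softmax log-likelihood gradient. As set up just before Lemma~\ref{lemma:grad}, the per-token gradient of $\mathcal J_\mathrm{DAPO}$ for an un-clipped token is $w_{i,t}\cdot\nabla\log\pi_\theta(y_{i,t}|x,y_{i,<t})$. We treat $w_{i,t}=r_{i,t}(\theta)\hat A_{i,t}$ as a scalar weight, which is the convention the paper uses to define it, so the ratio's own dependence on $\theta$ is already absorbed into the surrogate-gradient form. We then restrict attention to the logits $z\in\mathbb R^{|V|}$ at position $t$. We write $\pi_\theta(v|x,y_{i,<t})=\exp(z_v)/\sum_{u}\exp(z_u)$, abbreviate $p_v=\pi_\theta(v|\cdot)$, and let $k=y_{i,t}$ denote the index of the sampled token.

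First we compute the standard identity
\begin{equation*}
\frac{\partial \log p_k}{\partial z_v}=\mathbb I(v=k)-p_v ,
\end{equation*}
so that $\nabla_z\log p_k=e_k-p$, where $e_k$ is the one-hot vector of the sampled token and $p$ is the full probability vector. Multiplying by the weight gives
\begin{equation*}
\nabla_z\mathcal J_\mathrm{DAPO}(y_{i,t}|x,y_{i,<t})=w_{i,t}(e_k-p).
\end{equation*}

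Next we take the $\ell_1$ norm coordinatewise. The $k$-th coordinate has absolute value $|w_{i,t}|(1-p_k)$, since $p_k\le 1$. Every other coordinate $v\neq k$ has absolute value $|w_{i,t}|p_v$, and these sum to $|w_{i,t}|\sum_{v\neq k}p_v=|w_{i,t}|(1-p_k)$ because the probabilities sum to one. Adding the two contributions gives $2|w_{i,t}|(1-p_k)$, which is the claimed formula.

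The only delicate point is the modeling step: identifying exactly what the ``gradient w.r.t.\ $z$'' of this token's contribution means. We must hold $w_{i,t}$ fixed, as in the surrogate form stated before the lemma, and ignore the clip branch because the token is assumed un-clipped. Once that is fixed, the remaining steps are the routine softmax calculus and the sign bookkeeping that uses $0\le p_v\le 1$.
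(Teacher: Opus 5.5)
Your proof is correct and follows the paper's argument: compute $\nabla_z\log\pi_\theta(y_{i,t})=e_k-p$, then sum $(1-p_k)+\sum_{v\neq k}p_v=2(1-p_k)$. One small remark: you do not need to hold $w_{i,t}$ fixed as a modeling choice, because $\nabla_\theta r_{i,t}(\theta)=r_{i,t}(\theta)\nabla_\theta\log\pi_\theta(y_{i,t})$ makes $w_{i,t}\nabla_z\log\pi_\theta(y_{i,t})$ the exact gradient of the un-clipped term; this is how the paper's proof begins.
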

This partial gradient's $\ell1$-norm directly depends on $1-\pi_\theta(y_i|x,y_{i,<t})$, with larger gradient sizes for lower-probability tokens.
Furthermore, \citet{over_dominate} formally proved that the full gradient norm is tightly bounded by the $1-\pi_\theta(\cdot)$ term.
Consequently, low-probability tokens, despite their rarity, receive disproportionately large gradient updates.
We corroborate this empirically in Fig.~\ref{fig:probs}\textcolor{red}{\hyperref[fig:probs]{(a)}}, which plots tokens' probability and their gradient coefficient from an intermediate DAPO training step.
Although low-probability tokens are sampled infrequently, they account for most of the total gradient mass. This concentration of gradients explains why RLVR's modifications are sparse: learning is naturally focused on a small, high-impact set of low-probability positions.

\textbf{High $\boldsymbol{\Delta\log p}$ tokens are the updated low-probability tokens.}
To complete the argument, we link the low-probability tokens that dominate training updates to the high-$\Delta\log p$ tokens in the final model. 
Fig.~\ref{fig:probs}\textcolor{red}{\hyperref[fig:probs]{(b)}} analyzes tokens grouped by their $\Delta\log p$ values.
It reveals two patterns: first, the probability of tokens in high-$\Delta\log p$ bins increases substantially from the base to the RLVR model; second, these high-$\Delta\log p$ tokens have clearly lower probabilities in both models. 
This confirms that the most significant updates learned by RLVR target those low-probability tokens, and the sparsity of RLVR's changes is therefore a direct consequence of sparse, high-magnitude gradients acting on these critical tokens, which can be effectively identified post-hoc by their large $\Delta\log p$.

\textbf{Excluding low-probability tokens during training impairs performance.}
To causally verify the importance of these low-probability tokens, we conduct a training-time intervention experiment to provide direct evidence for our hypothesis.
We train the Qwen2.5-Math-7B base model \citep{Qwen25Math} using DAPO but adopt a top-p sampling strategy during rollout to filter out low-probability tokens.
The results, plotted in Fig.~\ref{fig:probs}\textcolor{red}{\hyperref[fig:probs]{(c)}}, are conclusive. 
Even a mild filter (\eg top-p=0.95) leads to a substantial drop in performance compared to the default setting (top-p=1.0). 
As the filter becomes more aggressive (\ie with lower top-p thresholds), performance degrades sharply.
This experiment demonstrates that these low-probability tokens are not merely correlated with gradient size but are essential for the reasoning improvements achieved by RLVR training.

\begin{tcolorbox}[colback=blue!5,colframe=NavyBlue!70,title=\textbf{Takeaway}]
\begin{enumerate}[left=0pt,topsep=0pt]
    \item \textbf{RLVR's gains stem from sparse, high-impact modifications.} 
    Our analysis reveals that RLVR's performance gains originate not from a global distribution shift, but from targeted, high-impact changes to a minority of tokens.
    \item \textbf{Logp difference pinpoints these sparse changes.}
    By capturing the direction of probability shifts from base to RLVR, logp difference outperforms magnitude-only metrics like entropy or divergence in isolating the reasoning-critical tokens that RLVR learns.
    \item \textbf{Sparsity originates from RLVR's focus on low-probability tokens.}
    The sparse difference is explained by the inherent concentration of RLVR's gradients on rare, low-probability tokens, making these tokens the focal point for improvement and the source of the sparse, high-$\Delta\log p$ changes we observe.
\end{enumerate}
\end{tcolorbox}

\section{Exploiting RLVR's Directional Updates to Boost Reasoning}
Building on Sec.~\ref{sec:analysis}, which isolates sparse and directional updates via $\Delta\log p$, we propose two practical strategies to utilize this directional learning:
(i) a \textit{test-time selective extrapolation} that shifts probability mass further along the learned direction on critical tokens;
(ii) a \textit{training-time advantage reweighting} that prioritizes low-probability tokens implicated by high $\Delta\log p$.
Both methods provide practical ways to boost performance by exploiting the directional mechanisms of RLVR.

\subsection{Test-Time Enhancement via Extrapolation}

\textbf{Selective test-time extrapolation along the $\boldsymbol{\Delta\log p}$ direction.} 
Our token replacement experiment demonstrated that $\Delta\log p$ effectively identifies the reasoning-critical changes of RLVR. 
This raises a natural question: Can we move beyond simple replacement and actively amplify these critical changes to surpass the RLVR model's performance?
We therefore instantiate a token-level extrapolation: treat
$\Delta\log p = \log\pi_\mathrm{RL}(\cdot)-\log\pi_\mathrm{Base}(\cdot)$ as a learned ``reasoning direction'' pointing from base to RLVR distribution.
Our strategy is to amplify this signal by extrapolating the RLVR model's distribution further along this direction. 
The extrapolated policy $\pi_\mathrm{Extra}^\gamma$ is given by:
\begin{equation}\label{eqn:extra}
\begin{aligned}
    \log\pi_\mathrm{Extra}^\gamma(y_t|x,y_{<t}) &\text{ := } \log\pi_\mathrm{RL}(y_t|x,y_{<t}) + \gamma\cdot \Delta\log p(y_t|x,y_{<t}) + z(x,y_{<t}) \\
    &= (1+\gamma)\cdot\log\pi_\mathrm{RL}(y_t|x,y_{<t}) - \gamma\cdot\log\pi_\mathrm{Base}(y_t|x,y_{<t}) + z(x,y_{<t}),
\end{aligned}
\end{equation}
where $\gamma$ is a hyperparameter controlling the extrapolating strength, and $z(\cdot)$ is a log-partition function.
In probability space, this is equivalent to re-weighting the RLVR distribution:
$$\pi_\mathrm{Extra}^\gamma(y_t|x,y_{<t})\propto\pi_\mathrm{RL}(y_t|x,y_{<t})\cdot\exp\big(\gamma\ \Delta\log p(y_t|x,y_{<t})\big).$$
This framing connects our method to reward-guided decoding literature \citep{decoding:ARGS, decoding:DeRa, decoding:GenARM}, where a reward function is used to re-weight the probability distribution.
Our $\Delta\log p$ thereby acts as a token-level reward that encourages better reasoning in this framework.

\textbf{Why selective?} RLVR’s improvements concentrate on a minority of tokens; most positions exhibit negligible $\Delta\log p$.
A global intervention risks distorting well-calibrated tokens.
We therefore apply extrapolation \emph{selectively}, using $f^\tau_\mathrm{logp}$ to gate positions with large negative $\Delta\log p$, and sample from the extrapolated policy $\pi^{\gamma}_{\mathrm{extra}}$ only at those positions (substituting $\pi_\mathrm{RL}$ in Algo.~\ref{alg:replace}, Line~\ref{line:replace}).


\textbf{Empirical Setup.}
We evaluate our method on the AIME-24 benchmark using the ORZ, DAPO, and UniReason model pairs, generating 32 samples per question (see Appx.~\ref{appdix:replace_impl} for more details).
To isolate the impact of our strategy, we compare three approaches: (1) RLVR: The original, non-intervened RLVR model $\pi_\mathrm{RL}$; (2) Selective Replace: Base model with tokens replaced by $\pi_\mathrm{RL}$; (3) Selective Extrapolate: Base model with tokens replaced by $\pi_\mathrm{Extra}^\gamma$.
For a controlled comparison, we use the same selection criteria for (2) and (3), with the only difference being the extrapolation.

\begin{wrapfigure}[11]{r}{0.4\textwidth}
    \centering
    \vspace{-12pt}
    \includegraphics[width=\linewidth]{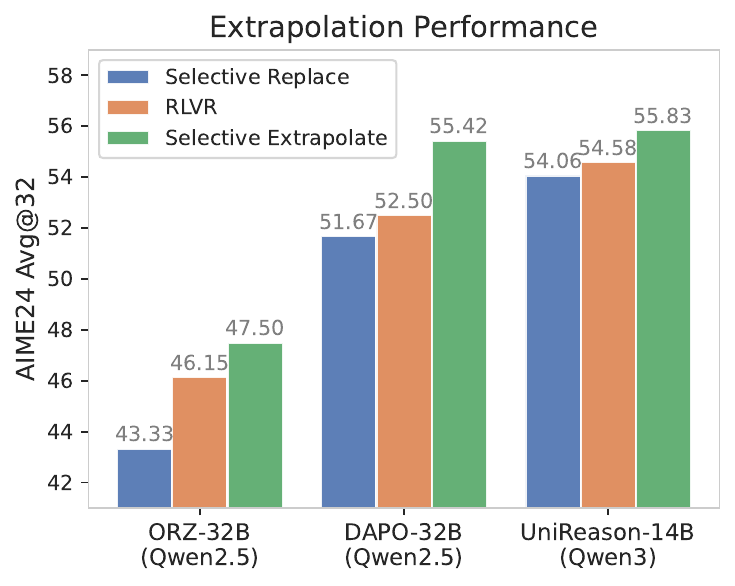}
    \captionsetup{skip=4pt}
    \caption{Extrapolation Performance}
    \label{fig:extrapolate}
\end{wrapfigure}

\textbf{Results.}
On AIME-24, Selective Extrapolation yields higher Avg@32 (average of 32 samples) than $\pi_\mathrm{RL}$ across ORZ-32B, DAPO-32B, and UniReason-14B under matched gates (Fig.~\ref{fig:extrapolate}).
In contrast, Selective Replace matches but does not surpass the RL baseline under the same criteria.
These results indicate that moving beyond $\pi_\mathrm{RL}$ along $\Delta\log p$ provides incremental gains in reasoning accuracy.


\begin{wraptable}[8]{r}{0.4\textwidth}
\vspace{-8pt}
\caption{Selective Extrapolate ($\gamma = 0.1$) on the RLVR model (DAPO-32B) instead of the base model.}
\label{tab:rl_extra}
\centering
\vspace{-6pt}
\scalebox{0.895}{\setlength{\tabcolsep}{0.9mm}{
    \begin{tabular}{@{}ccccc@{}}
    \toprule
    \textbf{Replace Ratio} & 0.0\% & 1.8\% & 5.2\% & 20.0\% \\ \midrule
    \textbf{Avg@32} & 52.50 & 53.96 & \textbf{55.31} & {\ul 55.10} \\
    \textbf{Threshold $\tau$} & N/A & -0.5 & -0.2 & 0.0 \\ \bottomrule
    \end{tabular}
}}
\end{wraptable}

\textbf{Extrapolating on $\boldsymbol{\pi_\mathrm{RL}}$}.
We also apply selective extrapolation directly on $\pi_\mathrm{RL}$ rather than on $\pi_\mathrm{Base}$ in Algo.~\ref{alg:replace} (Line~\ref{line:sample}).
As the threshold $\tau$ in $f_\mathrm{logp}^\tau$ increases, the AIME-24 performance improves up to a moderate intervention ratio, after which gains plateau (Table~\ref{tab:rl_extra}).
This pattern aligns with the sparsity finding: amplifying a limited set of reasoning-critical tokens is effective, whereas aggressive interventions yield diminishing returns.


\textbf{Theoretical Justification.}
Following a standard simplification in theoretical analysis for LLM RL training \citep{NLHF, crucial_sample, MAP}, we consider a tabular softmax bandit policy: $\pi_\theta(y|x)\propto \exp(\theta_{x,y})$, where the logit is individually parameterized by $\theta$ for each prompt-response pair $(x,y)$.
We assume the policy is trained with Natural Policy Gradient (NPG \citep{NPG}) following \citet{entropy_mechanism}, since its updates resemble the controlled optimization of PPO \citep{PPO}.
The update rule of NPG via backtracking simplifies to: $\theta^{t+1}_{x,y}-\theta_{x,y}^t = \eta\cdot A^t(x,y)$, where $\eta$ is the step size and $A^t$ is the advantage function \citep{PGTheory}.
In this context, our extrapolated policy (Eq.~\ref{eqn:extra}) is defined as $\pi_{\omega(\theta^t,\gamma)}$, where $\omega(\theta^t,\gamma) = \theta^t + \gamma(\theta^t-\theta^0)$.
Under these conditions, we have the following theorem (the proof can be found in Appx.~\ref{proof:extra}):
\begin{theorem}\label{theorem:extra}
    For a given prompt $x$, if a tabular softmax policy $\pi_{\theta^t}$ is updated via natural policy gradient \citep{NPG}, then the extrapolated policy $\pi_{\omega(\theta^t,\gamma)}$ satisfies:
    $$
    \exists\ \gamma>0, \mathbb E_{y\sim \pi_{\omega(\theta^t,\gamma)}(\cdot|s)}[R_{x,y}] \ge \mathbb E_{y\sim \pi_{\theta^t}(\cdot|s)}[R_{x,y}].
    $$
    Equality holds if and only if the reward $R_{x,y}$ is constant for all $y$.
\end{theorem}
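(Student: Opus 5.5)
The plan is to write the extrapolated logits in closed form, observe that they are a base policy exponentially tilted by the reward, and show that the expected reward is monotone in the tilt strength. Assume $t\ge 1$ (for $t=0$ the extrapolation is trivial).

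First, unroll the NPG recursion $\theta^{k+1}_{x,y}-\theta^{k}_{x,y}=\eta A^k(x,y)$ for $k=0,\dots,t-1$. In the bandit setting the advantage is $A^k(x,y)=R_{x,y}-b_k(x)$, where $b_k(x)=\mathbb E_{y\sim\pi_{\theta^k}}[R_{x,y}]$ depends only on $x$. Summing gives
\begin{equation*}
\theta^t_{x,y}-\theta^0_{x,y}=\eta t\,R_{x,y}-c_t(x),\qquad c_t(x)=\eta\sum_{k<t}b_k(x).
\end{equation*}
Since a softmax policy is invariant to adding a $y$-independent constant to the logits, the term $c_t(x)$ can be dropped. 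Substituting into $\omega(\theta^t,\gamma)=\theta^0+(1+\gamma)(\theta^t-\theta^0)$ then gives
\begin{equation*}
\pi_{\omega(\theta^t,\gamma)}(y|x)\propto \pi_{\theta^0}(y|x)\exp\big(\beta R_{x,y}\big),\qquad \beta=(1+\gamma)\eta t.
\end{equation*}
The extrapolated policy is therefore the same exponential tilt of $\pi_{\theta^0}$ as $\pi_{\theta^t}$ itself, just at the larger inverse temperature $\beta>\eta t$. The case $\gamma=0$ recovers $\pi_{\theta^t}$.

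Next, define $J(\beta)=\mathbb E_{y\sim\pi_\beta}[R_{x,y}]$ for this tilted family. This is the derivative of a log-partition function, so the standard exponential-family identity applies:
\begin{equation*}
J'(\beta)=\mathrm{Var}_{y\sim\pi_\beta}[R_{x,y}]\ge 0.
\end{equation*}
Hence $J$ is nondecreasing. For every $\gamma>0$ this gives
\begin{equation*}
J\big((1+\gamma)\eta t\big)\ge J(\eta t)=\mathbb E_{y\sim\pi_{\theta^t}}[R_{x,y}],
\end{equation*}
which proves the inequality, and in fact for every $\gamma>0$ rather than just for some $\gamma$.

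For the equality case, note that the tabular softmax has full support, so $\pi_\beta(y|x)>0$ for every $y$. The variance $\mathrm{Var}_{\pi_\beta}[R]$ is then zero if and only if $R_{x,\cdot}$ is constant. If $R$ is not constant, $J'>0$ on the whole interval $[\eta t,(1+\gamma)\eta t]$, so $J$ is strictly increasing there and the inequality is strict for every $\gamma>0$. If $R$ is constant, $J$ is constant and equality holds for every $\gamma$.

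The main obstacle is the first step: getting the closed form of $\theta^t-\theta^0$. It requires taking the advantage to be exactly the reward minus a prompt-only baseline, and noting that this baseline cancels under the softmax normalization. That convention must match the cited simplification of NPG from \citet{PGTheory}. If instead the advantage carried a $y$-dependent correction, I would fall back to a first-order argument: compute the derivative of the expected reward in $\gamma$ at $\gamma=0$, which equals $\mathrm{Cov}_{\pi_{\theta^t}}(R,\theta^t-\theta^0)$, and show this covariance is positive.
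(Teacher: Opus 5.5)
Your proof is correct, and it argues differently from the paper.

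The paper argues locally. It computes the directional derivative of $\mathcal J$ at $\theta^t_x$ along $d^t_x=\theta^t_x-\theta^0_x$ and reduces it to $\frac{1}{\Vert d^t_x\Vert}\sum_y \pi_{\theta^t_x}(y)A^t(x,y)d^t_{x,y}$. It then shows by induction that $d^t_{x,y}$ is ordered the same way as $R_{x,y}$. Finally it applies the weighted Chebyshev sum inequality together with $\sum_y\pi_{\theta^t_x}(y)A^t(x,y)=0$. The result is a derivative that is nonnegative and vanishes only for constant reward, so some sufficiently small $\gamma>0$ works.

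You instead solve the NPG recursion exactly. The paper's induction step already yields $d^t_{x,y}-d^t_{x,y'}=\eta t(R_{x,y}-R_{x,y'})$, but it keeps only the ordering. You then identify $\pi_{\omega(\theta^t,\gamma)}$ as the exponential tilt $\pi_{\theta^0}e^{(1+\gamma)\eta t R}$ and use $J'(\beta)=\mathrm{Var}_{\pi_\beta}[R]$.

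Your route gives a stronger, global conclusion. The expected reward is nondecreasing in $\gamma$ on all of $[0,\infty)$, and strictly increasing when $R$ is nonconstant, so every $\gamma>0$ works, not just a small one. It also shows that extrapolation is the same policy as running NPG for $t$ steps with step size $(1+\gamma)\eta$.

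The paper's route uses only the co-monotonicity of $d^t_x$ with $R_x$, not its exact form. So it is the natural one for update rules that preserve the ranking of responses without producing an exact tilt. Your fallback covariance argument is exactly that route.

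Your worry about the advantage convention does not arise: the paper uses precisely $A^t(x,y)=R_{x,y}-\pi_{\theta^t_x}^\top R_x$. Your explicit assumption $t\ge 1$ (with $\eta>0$) is genuinely needed for the ``only if'' part, since at $t=0$ equality holds for every reward. The paper leaves this implicit; its directional derivative divides by $\Vert d^t_x\Vert$, which is zero at $t=0$.
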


This theorem shows that, in the simplified setting, extrapolating along the learned difference direction of $\Delta\log p$ can improve the expected reward.
Nevertheless, we need to note that the proof relies on the idealized NPG's update rule, with a monotonic learning process consistently adjusting the logits along the reward's direction.
In contrast, our empirical analysis has shown that the updates learned by RLVR concentrate only on a minority of tokens, with $\Delta\log p$ on most tokens being negligible.
This disparity motivates our selective extrapolation only on positions with a significant difference, which exhibit the consistent, directional updates assumed by the theory.

\subsection{Training-Time Enhancement via Advantage Reweighting}

\begin{table}[t]
\caption{Comparison of our reweighting method and DAPO on math reasoning benchmarks.}
\label{tab:main}
\centering
\setlength{\tabcolsep}{0.6mm}{
\begin{tabular}{@{}cccccccccc@{}}
\toprule
\multirow{2}{*}{\textbf{Model}} & \multirow{2}{*}{\textbf{Method}} & \multicolumn{2}{c}{\textbf{AIME24}} & \multicolumn{2}{c}{\textbf{AIME25}} & \multicolumn{2}{c}{\textbf{AMC}} & \multicolumn{2}{c}{\textbf{Average}} \\ \cmidrule(l){3-10} 
 &  & Avg@32 & Pass@16 & Avg@32 & Pass@16 & Avg@32 & Pass@16 & Avg@32 & Pass@16 \\ \midrule
\multirow{3}{*}{\begin{tabular}[c]{@{}c@{}}Qwen2.5-\\ Math-7B\end{tabular}} & Base & 14.79 & 47.46 & 6.67 & 27.84 & 40.62 & 79.25 & 20.69 & 51.52 \\
 & DAPO & 35.73 & 54.09 & 17.6 & 30.45 & 73.04 & 89.03 & 42.12 & 57.86 \\
 & Ours & \textbf{39.06} & \textbf{60.58} & \textbf{18.54} & \textbf{36.72} & \textbf{73.64} & \textbf{89.69} & \textbf{43.75} & \textbf{62.33} \\ \midrule
\multirow{3}{*}{\begin{tabular}[c]{@{}c@{}}Qwen3-\\ 8B-Base\end{tabular}} & Base & 5.42 & 30.63 & 5.73 & 32.8 & 27.64 & 78.09 & 12.93 & 47.17 \\
 & DAPO & 36.98 & \textbf{72.3} & 26.67 & 46.76 & 69.13 & 88.51 & 44.26 & 69.19 \\
 & Ours & \textbf{38.13} & 69.87 & \textbf{31.15} & \textbf{55.38} & \textbf{71.05} & \textbf{92.3} & \textbf{46.78} & \textbf{72.52} \\ \bottomrule
\end{tabular}
}
\end{table}

\textbf{Training-time enhancement via probability-aware advantage reweighting.}
While our test-time approach amplifies the learned reasoning signal post-hoc, our training-time strategy proactively strengthens the model's reasoning signal during learning.
Instead of extrapolating the final logp difference $\Delta\log p$, we leverage the observed correlation between high $\Delta\log p$ and low-probability tokens (Fig.~\ref{fig:probs}\textcolor{red}{\hyperref[fig:probs]{(b)}}), and propose to amplify the learning signal of these critical low-probability tokens.
Since the parameter update is driven by the advantage term $\hat A_{i,t}$ in policy gradient methods, we modify the advantage in DAPO (Eq.~\ref{eqn:DAPO}) to prioritize low-probability tokens:
\begin{equation}\label{eqn:reweight}
    \tilde A_{i,t} = \big[1+\alpha\cdot\big(1-\pi_{\theta_\mathrm{old}}(y_{i,t}|x,y_{i,< t})\big)\big]\cdot \hat A_{i,t},
\end{equation}
where $\alpha$ is a hyperparameter controlling the reweighting strength.
Such a concentration on low-probability tokens also aligns with our top-p experiment in Fig.~\ref{fig:probs}\textcolor{red}{\hyperref[fig:probs]{(c)}}, which finds that low-probability tokens are irreplaceable for RLVR training.

\textbf{Experimental setup.}
We modify only the advantage (Eq.~\ref{eqn:reweight}) in the standard DAPO recipe and keep all other hyperparameters fixed.
We evaluate model performance on three math reasoning benchmarks: AIME-24, AIME-25, and AMC.
Following DAPO's setup, we use top-p=0.7 for sampling during evaluation. 
We report Avg@32 and Pass@16\footnote{With 32 samples, we report the more stable Pass@16 instead of Pass@32 for Pass@k evaluation.}, both computed over 32 samples per problem to ensure a stable estimate of the pass rates \citep{passk}.

\begin{wraptable}[13]{r}{0.5\textwidth}
    \vspace{-12pt}
    \caption{Results of various reweighting methods.}
    \label{tab:compare}
    \centering
    \vspace{-4pt}
    \setlength{\tabcolsep}{1.3mm}{
    \begin{tabular}{@{}cc|ccc@{}}
    \toprule
    \multicolumn{2}{c|}{\textbf{Method}} & PPL & Dominate & Ours \\ \midrule
    \multicolumn{1}{c|}{\multirow{2}{*}{\textbf{AIME24}}} & Avg@32 & 35.63 & {\ul 36.35} & \textbf{39.06} \\
    \multicolumn{1}{c|}{} & Pass@16 & \textbf{61.95} & 55.27 & {\ul 60.58} \\ \midrule
    \multicolumn{1}{c|}{\multirow{2}{*}{\textbf{AIME25}}} & Avg@32 & {\ul 16.46} & 13.02 & \textbf{18.54} \\
    \multicolumn{1}{c|}{} & Pass@16 & {\ul 32.19} & 20.69 & \textbf{36.72} \\ \midrule
    \multicolumn{1}{c|}{\multirow{2}{*}{\textbf{AMC}}} & Avg@32 & 72.06 & \textbf{79.97} & {\ul 73.64} \\
    \multicolumn{1}{c|}{} & Pass@16 & {\ul 89.1} & 84.93 & \textbf{89.69} \\ \midrule
    \multicolumn{1}{c|}{\multirow{2}{*}{\textbf{Average}}} & Avg@32 & 41.38 & {\ul 43.11} & \textbf{43.75} \\
    \multicolumn{1}{c|}{} & Pass@16 & {\ul 61.08} & 53.63 & \textbf{62.33} \\ \bottomrule
    \end{tabular}
    }
\end{wraptable}

\textbf{Results: performance gains across models and datasets.}
We compare our reweighting method on two models: Qwen2.5-Math-7B \citep{Qwen25Math} and Qwen3-8B-Base \citep{Qwen3}.
As shown in Tab.~\ref{tab:main}, enhancing low-probability tokens' weight consistently improves reasoning accuracy across all tested models and datasets.
Notably, this enhanced accuracy (Avg@32) doesn't come at the cost of exploration ability (often measured by Pass@k) \citep{doesrl}; in fact, the average Pass@16 also increases over the DAPO baseline.

\textbf{Comparison of different reweighting.}
While our reweighting method is motivated by the critical role of low-probability tokens, existing work has proposed alternative reweighting strategies that stem from different hypotheses:
(1) PPL: \citet{decomposing_zhaoxin} find that RLVR updates favor \textit{low-ppl} responses, so they reweight advantage to enhance these responses: $\tilde A_{i,t}^\mathrm{ppl} = [1-\alpha\cdot w_\mathrm{ppl}(y_i)]\cdot\hat A_{i,t}$, where $w_\mathrm{ppl}(y_i)$ is a normalized log-PPL weight.
(2) Dominate: \citet{over_dominate} argue that RLVR training can be \textit{over-dominated} by low-probability tokens, so they propose to counteract this by upweighting high-probability tokens: $\tilde A_{i,t}^\mathrm{dom} = [\alpha\cdot\pi_\theta(y_{i,t})+1-\alpha]\cdot\hat A_{i,t}$.
We implement these methods using their recommended hyperparameters and compare the performance on Qwen2.5-Math-7B.
As shown in Table~\ref{tab:compare}, our method of directly amplifying low-probability tokens achieves the best overall performance for both Avg@32 and Pass@16. 
The training dynamics in Fig.~\ref{fig:training_curve} provide further insight: Our method not only exhibits higher reasoning accuracy but also a steady increase in response length.
This simultaneous increase in performance and length is a key pattern in effective reasoning RLVR training \citep{Deepseek-R1}, suggesting the promoted reasoning behavior by our method.
Moreover, the training entropy of $\tilde A_{i,t}^\mathrm{dom}$ reweighting is clearly lower, since they adopt a more restrictive clip-higher ratio of $\epsilon_\mathrm{high} = 0.24$ than the default $\epsilon_\mathrm{high} = 0.28$ in DAPO\footnote{This follows the recommended value in their paper \citep{over_dominate}. We also tested the default $\epsilon_\mathrm{high} = 0.28$, but it resulted in unstable training.}.
The lower entropy (less exploration) also explains their reduced Pass@k performance in Tab.~\ref{tab:compare}.

\begin{figure}[t]
    \centering
    \includegraphics[width=0.9\linewidth]{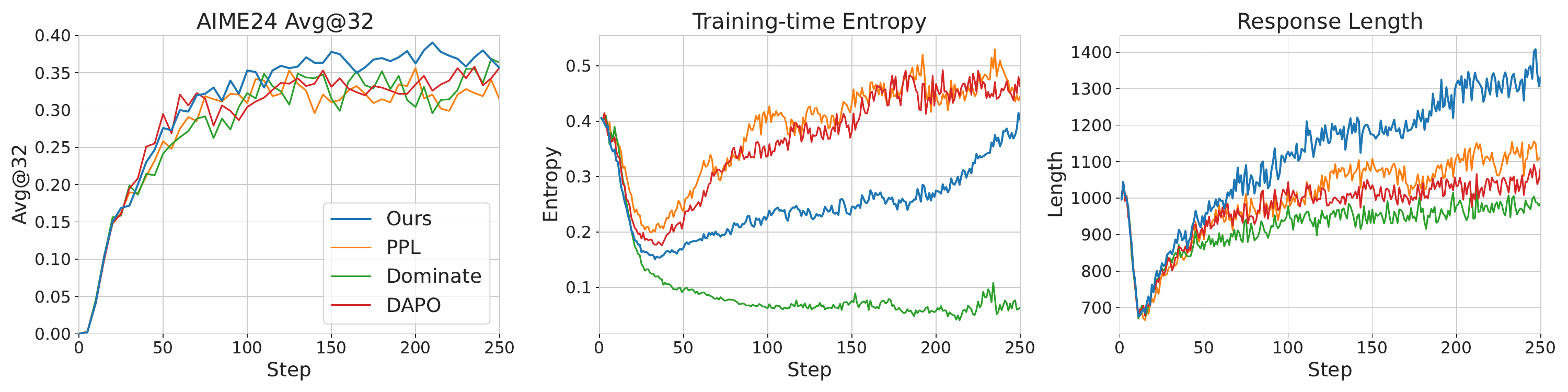}
    \captionsetup{skip=5pt}
    \caption{Training curves for different reweighting methods on Qwen2.5-Math-7B.}
    \vspace{-1.5em}
    \label{fig:training_curve}
\end{figure}
\vspace{-3pt}
\section{Related Work}
\vspace{-3pt}

\textbf{Reinforcement learning for LLM.}
Reinforcement learning is a pivotal component of the LLM post-training pipeline. 
Early applications centered on Reinforcement Learning from Human Feedback (RLHF) for model alignment \citep{rlhf-ouyang2022training, rlhf-stiennon2020learning}, while recent advancements shift their focus to building reasoning models with RL.
OpenAI o1 \citep{Openai-O1} is the first reasoning model, and DeepSeek R1 \citep{Deepseek-R1} introduces a detailed RLVR \citep{tulu-RLVR} recipe for building reasoning models with the GRPO algorithm \citep{GRPO}.
These seminal works inspired the development of a series of subsequent models, from industrial systems like Kimi\citep{kimi-k1.5}, Qwen3 \citep{Qwen3}, and Gemini 2.5 \citep{Gemini-2.5}, to open-source academic algorithms such as Dr.GRPO \citep{Dr.GRPO}, Open-Reasoner-Zero \citep{ORZ}, DAPO \citep{DAPO}, GSPO \citep{GSPO}, and QAE \citep{QAE}, to further improve the reasoning abilities.
In this paper, we adopt DAPO as our baseline RLVR algorithm.

\textbf{Understanding the effects of RLVR.}
The success of RLVR has prompted a line of research dedicated to understanding its effects. While early work analyzed high-level cognitive behaviors of RLVR-trained models \citep{cognitive_behavior, hu2025distillation,thoughtanchors}, recent studies have deepened the analysis with token-level quantification \citep{qian2025demystifying, emergent_hirerarchical}. 
\citet{entropy_mechanism} studied the token entropy change during RLVR, \citet{over_dominate} quantified the gradient norm of specific tokens, and \citet{decomposing_zhaoxin, crosssample} used token replacement to measure their impact on reasoning performance.
A core finding from these analyses is that RLVR induces sparse updates, which have been verified through high-entropy tokens \citep{entropy_28_rule}, KL Divergences \citep{UniReason}, and the sparse gradient norm \citep{over_dominate, decomposing_zhaoxin}.
\updated{However, when studying the differences between base and RLVR models, prior studies mainly focus on the magnitude of changes, largely overlooking their direction.
While \citep{over_dominate} analyzes the update direction (increase or decrease) of probabilities at each gradient step, we extend the notion of update direction to the full distributional shift from the base model to the RLVR model, and we propose explicitly extrapolating along this learned direction in distribution space.
}

\vspace{-3pt}
\section{Conclusion}
\vspace{-3pt}

In this work, we introduced a directional analysis of RLVR based on the logp difference $\Delta\log p$, shown to be more effective in identifying sparse yet reasoning-critical updates than magnitude-based metrics (\eg divergence or entropy).
Building on this, we proposed a test-time extrapolation to amplify these directional updates and a training-time reweighting to focus learning on the low-probability tokens that $\Delta\log p$ highlights.
Both methods improve reasoning performance across different settings, validating our key principle: diagnose and improve RLVR by its update direction.

\textbf{Limitations and future work.}
One primary limitation of our extrapolation method is the requirement of two models; future work could integrate this with parameter-efficient finetuning to reduce computational cost.
The extrapolation also introduces additional hyperparameters, and future work can explore combining the selection threshold and extrapolation strength for a more adaptive extrapolation.
Additionally, our reweighting approach could be evaluated for different model scales or combined with other adaptive training techniques.

\section*{Contributions}\label{sec:contribution}
\textbf{Authors:}
Kexin Huang, Haoming Meng, Junkang Wu, Jinda Lu, Chiyu Ma, Ziqian Chen, Xue Wang, Bolin Ding, Jiancan Wu, Xiang Wang, Xiangnan He, Guoyin Wang, and Jingren Zhou.

\clearpage
\bibliography{bibliography}

@article{Openai-O1,
  title={Openai o1 system card},
  author={Jaech, Aaron and Kalai, Adam and Lerer, Adam and Richardson, Adam and El-Kishky, Ahmed and Low, Aiden and Helyar, Alec and Madry, Aleksander and Beutel, Alex and Carney, Alex and others},
  journal={arXiv preprint arXiv:2412.16720},
  year={2024}
}

@article{Deepseek-R1,
  title     = {DeepSeek-R1 incentivizes reasoning in LLMs through reinforcement learning},
  author    = {Guo, Daya and Yang, Dejian and Zhang, Haowei and Song, Junxiao and Wang, Peiyi and Zhu, Qihao and others},
  journal   = {Nature},
  volume    = {645},
  pages     = {633--638},
  year      = {2025}
}

@article{Gemini-2.5,
  title={Gemini 2.5: Pushing the frontier with advanced reasoning, multimodality, long context, and next generation agentic capabilities},
  author={Comanici, Gheorghe and Bieber, Eric and Schaekermann, Mike and Pasupat, Ice and Sachdeva, Noveen and Dhillon, Inderjit and Blistein, Marcel and Ram, Ori and Zhang, Dan and Rosen, Evan and others},
  journal={arXiv preprint arXiv:2507.06261},
  year={2025}
}

@article{Qwen3,
  title={Qwen3 technical report},
  author={Yang, An and Li, Anfeng and Yang, Baosong and Zhang, Beichen and Hui, Binyuan and Zheng, Bo and Yu, Bowen and Gao, Chang and Huang, Chengen and Lv, Chenxu and others},
  journal={arXiv preprint arXiv:2505.09388},
  year={2025}
}

@article{kimi-k1.5,
  title={Kimi k1. 5: Scaling reinforcement learning with llms},
  author={Team, Kimi},
  journal={arXiv preprint arXiv:2501.12599},
  year={2025}
}

@article{Qwen25,
  title={Qwen2.5 technical report},
  author={Team, Qwen},
  journal={arXiv preprint arXiv:2412.15115},
  year={2024}
}

@misc{Qwen25Math,
      title={Qwen2.5-Math Technical Report: Toward Mathematical Expert Model via Self-Improvement}, 
      author={An Yang and Beichen Zhang and Binyuan Hui and Bofei Gao and Bowen Yu and Chengpeng Li and Dayiheng Liu and Jianhong Tu and Jingren Zhou and Junyang Lin and Keming Lu and Mingfeng Xue and Runji Lin and Tianyu Liu and Xingzhang Ren and Zhenru Zhang},
      year={2024},
      eprint={2409.12122},
      archivePrefix={arXiv},
      primaryClass={cs.CL}
}

@article{tulu-RLVR,
  title={Tulu 3: Pushing frontiers in open language model post-training},
  author={Lambert, Nathan and Morrison, Jacob and Pyatkin, Valentina and Huang, Shengyi and Ivison, Hamish and Brahman, Faeze and Miranda, Lester James V and Liu, Alisa and Dziri, Nouha and Lyu, Shane and others},
  journal={arXiv preprint arXiv:2411.15124},
  year={2024}
}

@article{ORZ,
  title={Open-reasoner-zero: An open source approach to scaling up reinforcement learning on the base model},
  author={Hu, Jingcheng and Zhang, Yinmin and Han, Qi and Jiang, Daxin and Zhang, Xiangyu and Shum, Heung-Yeung},
  journal={arXiv preprint arXiv:2503.24290},
  year={2025}
}

@article{PPO,
  title={Proximal policy optimization algorithms},
  author={Schulman, John and Wolski, Filip and Dhariwal, Prafulla and Radford, Alec and Klimov, Oleg},
  journal={arXiv preprint arXiv:1707.06347},
  year={2017}
}

@article{GRPO,
  title={Deepseekmath: Pushing the limits of mathematical reasoning in open language models},
  author={Shao, Zhihong and Wang, Peiyi and Zhu, Qihao and Xu, Runxin and Song, Junxiao and Bi, Xiao and Zhang, Haowei and Zhang, Mingchuan and Li, YK and Wu, Yang and others},
  journal={arXiv preprint arXiv:2402.03300},
  year={2024}
}

@article{DAPO,
  title={Dapo: An open-source llm reinforcement learning system at scale},
  author={Yu, Qiying and Zhang, Zheng and Zhu, Ruofei and Yuan, Yufeng and Zuo, Xiaochen and Yue, Yu and Dai, Weinan and Fan, Tiantian and Liu, Gaohong and Liu, Lingjun and others},
  journal={arXiv preprint arXiv:2503.14476},
  year={2025}
}

@article{Dr.GRPO,
  title={Understanding r1-zero-like training: A critical perspective},
  author={Liu, Zichen and Chen, Changyu and Li, Wenjun and Qi, Penghui and Pang, Tianyu and Du, Chao and Lee, Wee Sun and Lin, Min},
  journal={arXiv preprint arXiv:2503.20783},
  year={2025}
}

@article{GSPO,
  title={Group Sequence Policy Optimization},
  author={Zheng, Chujie and Liu, Shixuan and Li, Mingze and Chen, Xiong-Hui and Yu, Bowen and Gao, Chang and Dang, Kai and Liu, Yuqiong and Men, Rui and Yang, An and others},
  journal={arXiv preprint arXiv:2507.18071},
  year={2025}
}

@article{GMPO,
  title={Geometric-Mean Policy Optimization},
  author={Zhao, Yuzhong and Liu, Yue and Liu, Junpeng and Chen, Jingye and Wu, Xun and Hao, Yaru and Lv, Tengchao and Huang, Shaohan and Cui, Lei and Ye, Qixiang and others},
  journal={arXiv preprint arXiv:2507.20673},
  year={2025}
}

@article{entropy_mechanism,
  title={The entropy mechanism of reinforcement learning for reasoning language models},
  author={Cui, Ganqu and Zhang, Yuchen and Chen, Jiacheng and Yuan, Lifan and Wang, Zhi and Zuo, Yuxin and Li, Haozhan and Fan, Yuchen and Chen, Huayu and Chen, Weize and others},
  journal={arXiv preprint arXiv:2505.22617},
  year={2025}
}

@article{entropy_28_rule,
  title={Beyond the 80/20 rule: High-entropy minority tokens drive effective reinforcement learning for llm reasoning},
  author={Wang, Shenzhi and Yu, Le and Gao, Chang and Zheng, Chujie and Liu, Shixuan and Lu, Rui and Dang, Kai and Chen, Xionghui and Yang, Jianxin and Zhang, Zhenru and others},
  journal={arXiv preprint arXiv:2506.01939},
  year={2025}
}

@article{over_dominate,
  title={Do Not Let Low-Probability Tokens Over-Dominate in RL for LLMs},
  author={Yang, Zhihe and Luo, Xufang and Wang, Zilong and Han, Dongqi and He, Zhiyuan and Li, Dongsheng and Xu, Yunjian},
  journal={arXiv preprint arXiv:2505.12929},
  year={2025}
}

@article{UniReason,
  title={Does Math Reasoning Improve General LLM Capabilities? Understanding Transferability of LLM Reasoning},
  author={Huan, Maggie and Li, Yuetai and Zheng, Tuney and Xu, Xiaoyu and Kim, Seungone and Du, Minxin and Poovendran, Radha and Neubig, Graham and Yue, Xiang},
  journal={arXiv preprint arXiv:2507.00432},
  year={2025}
}

@misc{decomposing_zhaoxin,
      title={Decomposing the Entropy-Performance Exchange: The Missing Keys to Unlocking Effective Reinforcement Learning}, 
      author={Jia Deng and Jie Chen and Zhipeng Chen and Wayne Xin Zhao and Ji-Rong Wen},
      year={2025},
      eprint={2508.02260},
      archivePrefix={arXiv},
      primaryClass={cs.CL},
}

@inproceedings{iclr25_dynamics,
    title={Learning Dynamics of {LLM} Finetuning},
    author={Yi Ren and Danica J. Sutherland},
    booktitle={The Thirteenth International Conference on Learning Representations},
    year={2025}
}

@misc{passk,
      title={Evaluating Large Language Models Trained on Code}, 
      author={Mark Chen and Jerry Tworek and Heewoo Jun and Qiming Yuan and Henrique Ponde de Oliveira Pinto and Jared Kaplan and Harri Edwards and Yuri Burda and Nicholas Joseph and Greg Brockman and Alex Ray and others},
      year={2021},
      eprint={2107.03374},
      archivePrefix={arXiv},
      primaryClass={cs.LG},
}

@inproceedings{
    decoding:ARGS,
    title={{ARGS}: Alignment as Reward-Guided Search},
    author={Maxim Khanov and Jirayu Burapacheep and Yixuan Li},
    booktitle={The Twelfth International Conference on Learning Representations},
    year={2024},
    url={https://openreview.net/forum?id=shgx0eqdw6}
}

@InProceedings{decoding:DeRa,
  title = 	 {Decoding-time Realignment of Language Models},
  author =       {Liu, Tianlin and Guo, Shangmin and Bianco, Leonardo and Calandriello, Daniele and Berthet, Quentin and Llinares-L\'{o}pez, Felipe and Hoffmann, Jessica and Dixon, Lucas and Valko, Michal and Blondel, Mathieu},
  booktitle = 	 {Proceedings of the 41st International Conference on Machine Learning},
  pages = 	 {31015--31031},
  year = 	 {2024},
  volume = 	 {235},
  series = 	 {Proceedings of Machine Learning Research},
  month = 	 {21--27 Jul},
  publisher =    {PMLR}
}

@inproceedings{
    decoding:GenARM,
    title={Gen{ARM}: Reward Guided Generation with Autoregressive Reward Model for Test-Time Alignment},
    author={Yuancheng Xu and Udari Madhushani Sehwag and Alec Koppel and Sicheng Zhu and Bang An and Furong Huang and Sumitra Ganesh},
    booktitle={The Thirteenth International Conference on Learning Representations},
    year={2025},
    url={https://openreview.net/forum?id=J0qTpmbSbh}
}

@inproceedings{NLHF,
  title={Nash Learning from Human Feedback},
  author={Munos, Remi and Valko, Michal and Calandriello, Daniele and Azar, Mohammad Gheshlaghi and Rowland, Mark and Guo, Zhaohan Daniel and Tang, Yunhao and Geist, Matthieu and Mesnard, Thomas and Fiegel, C{\^o}me and others},
  booktitle={Forty-first International Conference on Machine Learning},
  year={2024}
}

@inproceedings{
    crucial_sample,
    title={The Crucial Role of Samplers in Online Direct Preference Optimization},
    author={Ruizhe Shi and Runlong Zhou and Simon Shaolei Du},
    booktitle={The Thirteenth International Conference on Learning Representations},
    year={2025},
    url={https://openreview.net/forum?id=F6z3utfcYw}
}

@inproceedings{NPG,
 author = {Kakade, Sham M},
 booktitle = {Advances in Neural Information Processing Systems},
 editor = {T. Dietterich and S. Becker and Z. Ghahramani},
 pages = {},
 publisher = {MIT Press},
 title = {A Natural Policy Gradient},
 volume = {14},
 year = {2001}
}

@article{PGTheory,
  author  = {Alekh Agarwal and Sham M. Kakade and Jason D. Lee and Gaurav Mahajan},
  title   = {On the Theory of Policy Gradient Methods: Optimality, Approximation, and Distribution Shift},
  journal = {Journal of Machine Learning Research},
  year    = {2021},
  volume  = {22},
  number  = {98},
  pages   = {1--76},
  url     = {http://jmlr.org/papers/v22/19-736.html}
}

@article{doesrl,
  title={Does Reinforcement Learning Really Incentivize Reasoning Capacity in LLMs Beyond the Base Model?},
  author={Yue, Yang and Chen, Zhiqi and Lu, Rui and Zhao, Andrew and Wang, Zhaokai and Yue, Yang and Song, Shiji and Huang, Gao},
  journal={arXiv preprint arXiv:2504.13837},
  year={2025}
}

@article{rlhf-stiennon2020learning,
  title={Learning to summarize with human feedback},
  author={Stiennon, Nisan and Ouyang, Long and Wu, Jeffrey and Ziegler, Daniel and Lowe, Ryan and Voss, Chelsea and Radford, Alec and Amodei, Dario and Christiano, Paul F},
  journal={Advances in Neural Information Processing Systems (NeurIPS)},
  volume={33},
  pages={3008--3021},
  year={2020}
}

@article{rlhf-ouyang2022training,
  title={Training language models to follow instructions with human feedback},
  author={Ouyang, Long and Wu, Jeffrey and Jiang, Xu and Almeida, Diogo and Wainwright, Carroll and Mishkin, Pamela and Zhang, Chong and Agarwal, Sandhini and Slama, Katarina and Ray, Alex and others},
  journal={Advances in neural information processing systems (NeurIPS)},
  volume={35},
  pages={27730--27744},
  year={2022}
}

@inproceedings{
cognitive_behavior,
title={Cognitive Behaviors that Enable Self-Improving Reasoners, or, Four Habits of Highly Effective {ST}aRs},
author={Kanishk Gandhi and Ayush K Chakravarthy and Anikait Singh and Nathan Lile and Noah Goodman},
booktitle={Second Conference on Language Modeling},
year={2025},
url={https://openreview.net/forum?id=QGJ9ttXLTy}
}

@article{hu2025distillation,
  title={Why Distillation can Outperform Zero-RL: The Role of Flexible Reasoning},
  author={Hu, Xiao and Lu, Xingyu and Mao, Liyuan and Zhang, YiFan and Zhang, Tianke and Wen, Bin and Yang, Fan and Gao, Tingting and Zhou, Guorui},
  journal={arXiv preprint arXiv:2505.21067},
  year={2025}
}

@article{qian2025demystifying,
  title={Demystifying reasoning dynamics with mutual information: Thinking tokens are information peaks in llm reasoning},
  author={Qian, Chen and Liu, Dongrui and Wen, Haochen and Bai, Zhen and Liu, Yong and Shao, Jing},
  journal={arXiv preprint arXiv:2506.02867},
  year={2025}
}

@misc{thoughtanchors,
      title={Thought Anchors: Which LLM Reasoning Steps Matter?}, 
      author={Paul C. Bogdan and Uzay Macar and Neel Nanda and Arthur Conmy},
      year={2025},
      eprint={2506.19143},
      archivePrefix={arXiv},
      primaryClass={cs.LG},
      url={https://arxiv.org/abs/2506.19143}, 
}

@article{emergent_hirerarchical,
  title={Emergent hierarchical reasoning in llms through reinforcement learning},
  author={Wang, Haozhe and Xu, Qixin and Liu, Che and Wu, Junhong and Lin, Fangzhen and Chen, Wenhu},
  journal={arXiv preprint arXiv:2509.03646},
  year={2025}
}

@inproceedings{minerva,
author = {Lewkowycz, Aitor and Andreassen, Anders and Dohan, David and Dyer, Ethan and Michalewski, Henryk and Ramasesh, Vinay and Slone, Ambrose and Anil, Cem and Schlag, Imanol and Gutman-Solo, Theo and Wu, Yuhuai and Neyshabur, Behnam and Gur-Ari, Guy and Misra, Vedant},
title = {Solving quantitative reasoning problems with language models},
year = {2022},
address = {Red Hook, NY, USA},
booktitle = {Proceedings of the 36th International Conference on Neural Information Processing Systems},
articleno = {278},
numpages = {15},
location = {New Orleans, LA, USA},
series = {NeurIPS '22}
}

@article{QAE,
  title={Quantile advantage estimation for entropy-safe reasoning},
  author={Wu, Junkang and Huang, Kexin and Wu, Jiancan and Zhang, An and Wang, Xiang and He, Xiangnan},
  journal={arXiv preprint arXiv:2509.22611},
  year={2025}
}

@inproceedings{
MAP,
title={Larger or Smaller Reward Margins to Select Preferences for {LLM} Alignment?},
author={Kexin Huang and Junkang Wu and Ziqian Chen and Xue Wang and Jinyang Gao and Bolin Ding and Jiancan Wu and Xiangnan He and Xiang Wang},
booktitle={Forty-second International Conference on Machine Learning},
year={2025},
url={https://openreview.net/forum?id=ncTwQagrj8}
}

@inproceedings{
crosssample,
title={Sparse but Critical: A Token-Level Analysis of Distributional Shifts in {RLVR} Fine-Tuning of {LLM}s},
author={Haoming Meng and Kexin Huang and Shaohang Wei and Chiyu Ma and Shuo Yang and Xue Wang and Guoyin Wang and Bolin Ding and Jingren Zhou},
booktitle={The Fourteenth International Conference on Learning Representations},
year={2026},
url={https://openreview.net/forum?id=8vWIXno8LW}
}
\bibliographystyle{colm2024_conference}

\clearpage
\appendix
\section{Selective Token Replacement \& Extraploation}

\subsection{Implementation Details}\label{appdix:replace_impl}
\textbf{Models.}
Our experiments use several publicly available RLVR-trained models and their corresponding base models from the Qwen series \citep{Qwen3, Qwen25}:
\begin{itemize}[left=0pt, topsep=0pt]
\item ORZ: The \href{https://huggingface.co/Open-Reasoner-Zero/Open-Reasoner-Zero-32B}{Open-Reasoner-Zero-32B} model \citep{ORZ}, finetuned from \href{https://huggingface.co/Qwen/Qwen2.5-32B}{Qwen2.5-32B} base model using the PPO algorithm.
\item DAPO: The \href{https://huggingface.co/BytedTsinghua-SIA/DAPO-Qwen-32B}{DAPO-Qwen-32B} model \citep{DAPO}, finetuned from the same \href{https://huggingface.co/Qwen/Qwen2.5-32B}{Qwen2.5-32B} base but with the DAPO algorithm.
\item UniReason: The \href{https://huggingface.co/ReasoningTransferability/UniReason-Qwen3-14B-RL}{UniReason-Qwen3-14B-RL} model \citep{UniReason}, finetuned from \href{https://huggingface.co/Qwen/Qwen3-14B-Base}{Qwen3-14B-Base} using the GRPO algorithm.
\end{itemize}

\textbf{Sampling settings.}
We utilize the \href{https://huggingface.co/datasets/HuggingFaceH4/aime_2024}{AIME-24 dataset} to evaluate the replacement performance.
We adopt the default chat prompt template from each model, with the user prompt defined as follows:
\begin{small}
\begin{verbatim}
[Question]
Please reason step by step, and put your final answer within \\boxed{}.
\end{verbatim}
\end{small}
We set the sampling parameters with top-p=0.7, temperature=1.0, max-length=20k, and sample 32 responses for each question.
The answer is extracted from the last ``boxed'' wrapped text and verified using \href{https://github.com/huggingface/Math-Verify}{Math-Verify}. We report the correctness averaged over 32 samples, \ie Avg@32.

\textbf{Hyperparameters for extrapolation.}
As described in Algo.~\ref{alg:replace}, the replacement is adopted selectively, controlled by the threshold $\tau$ in the criteria function $f^\tau$, while the extrapolation strength is adjusted by the parameter $\gamma$ in $\pi_\mathrm{Extra}^\gamma$.
For the extrapolation results in Fig.~\ref{fig:extrapolate}, the ``Selective Extrpolate'' and ``Selective Replace'' methods share the same hyperparameters for each model, which we summarize as follows:

\begin{table}[htbp]
\centering
\caption{Hyperparameters for the extrapolation results (Fig.~\ref{fig:extrapolate}).}
\label{tab:param_extra}
\begin{tabular}{@{}cccc@{}}
\toprule
\textbf{Model} & \textbf{ORZ} & \updated{\textbf{UniReason}} & \updated{\textbf{DAPO}} \\ \midrule
Threshold $\tau$ for $f_\mathrm{logp}^\gamma$ & -0.4 & -0.35 & -0.3 \\
Replaced Ratio & 10.1\% & 7.5\% & 11.4\% \\
$\gamma$ in $\pi_\mathrm{Extra}^\gamma$ & 0.1 & 0.1 & 0.05 \\ \bottomrule
\end{tabular}
\end{table}

\subsection{Additional Experiments}\label{appdix:replace_abl}

\textbf{Additional metrics.}
As described in Sec.~\ref{sec:analysis}, our primary metrics for token replacement are the base model's entropy $\mathcal H_\mathrm{Base}$, KL Divergence $\mathbb D^\mathrm{KL}$, and logp difference $\Delta\log p$.
For our ablation study, we include additional metrics: the RLVR model's entropy $\mathcal H_\mathrm{RL}$ and two KL-divergence variants: $\mathbb D^\mathrm{KL}_{\pi_\mathrm{RL}, \pi_\mathrm{Base}}$ and $\mathbb D^\mathrm{KL}_{\pi_\mathrm{Base}, \pi_\mathrm{RL}}$.
We evaluate these metrics as criteria for the DAPO model's selective replacement.
By varying the threshold $\tau$ for each criterion, we control the token replacement frequency and plot the performance on AIME-24 against various replacement ratios in Fig.~\ref{fig:kl_ablation}. 
As shown in the figure, although the additional metrics' selected replacements also approach the RLVR model's performance, they still require more replacement than $\Delta\log p$ does.
This confirms the performance ordering for identifying reasoning-critical tokens: logp difference $>$ divergence $>$ entropy.

\begin{figure}[ht]
    \centering
    \includegraphics[width=\linewidth]{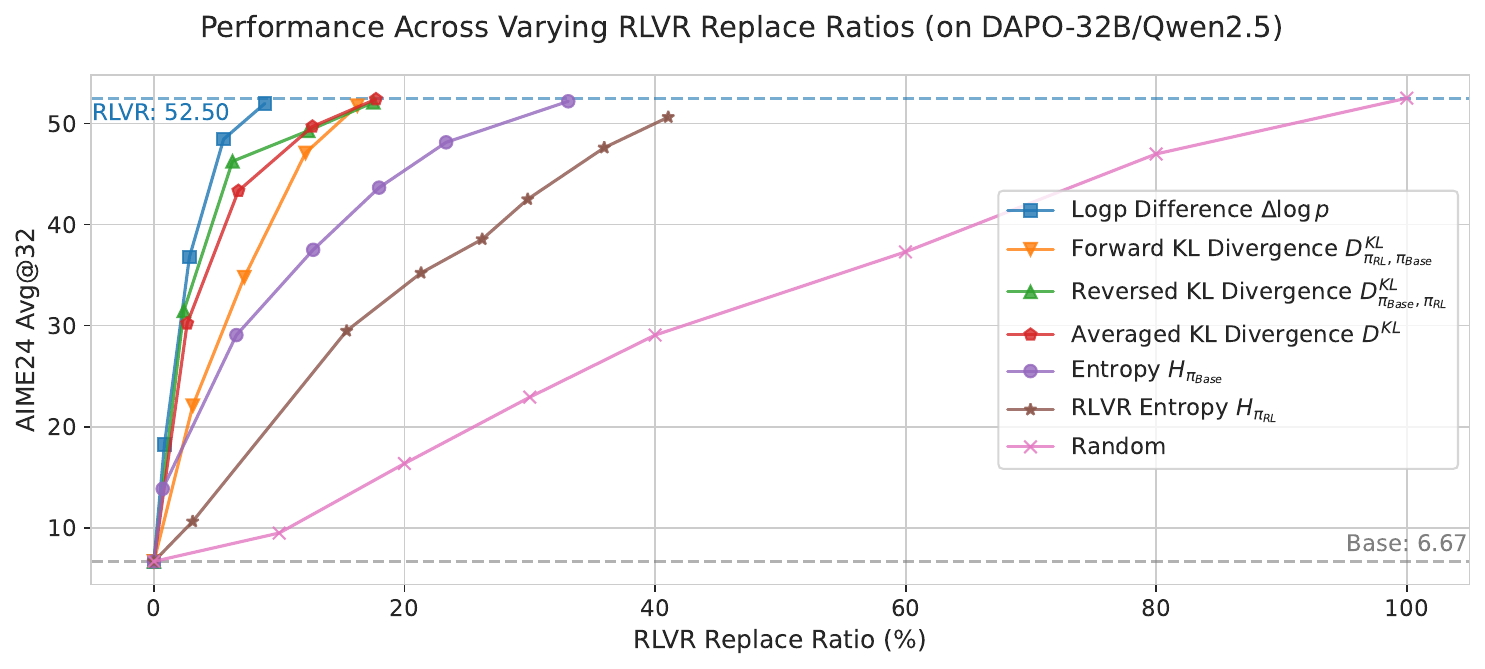}
    \caption{Selective token replacement results with additional criteria for DAPO.}
    \label{fig:kl_ablation}
\end{figure}

\textbf{Selected Tokens.}
To provide an intuitive comparison of the metrics, we analyze the tokens utilized for replacing the base model's choice during DAPO's token replacement of entropy $\mathcal H_{\pi_\mathrm{Base}}$, KL Divergence $\mathbb D^\mathrm{KL}$, and logp difference $\Delta\log p$.
To ensure a fair comparison, we adjust the threshold for each metric to achieve a replacement rate of approximately 8\%. 
Fig.~\ref{fig:top_tokens} illustrates each criterion's top 50 substitution tokens. 
The figure reveals that entropy-based selection favors logical transition words (e.g., Thus, need, can), while the divergence and $\Delta\log p$ criteria utilize more specific mathematical reasoning tokens, including a higher proportion of math symbols.
Combined with the inferior performance of the entropy criterion, this suggests that these specific mathematical tokens might be more efficient for improving reasoning performance.

\begin{figure}[ht]
    \centering
    \includegraphics[width=\linewidth]{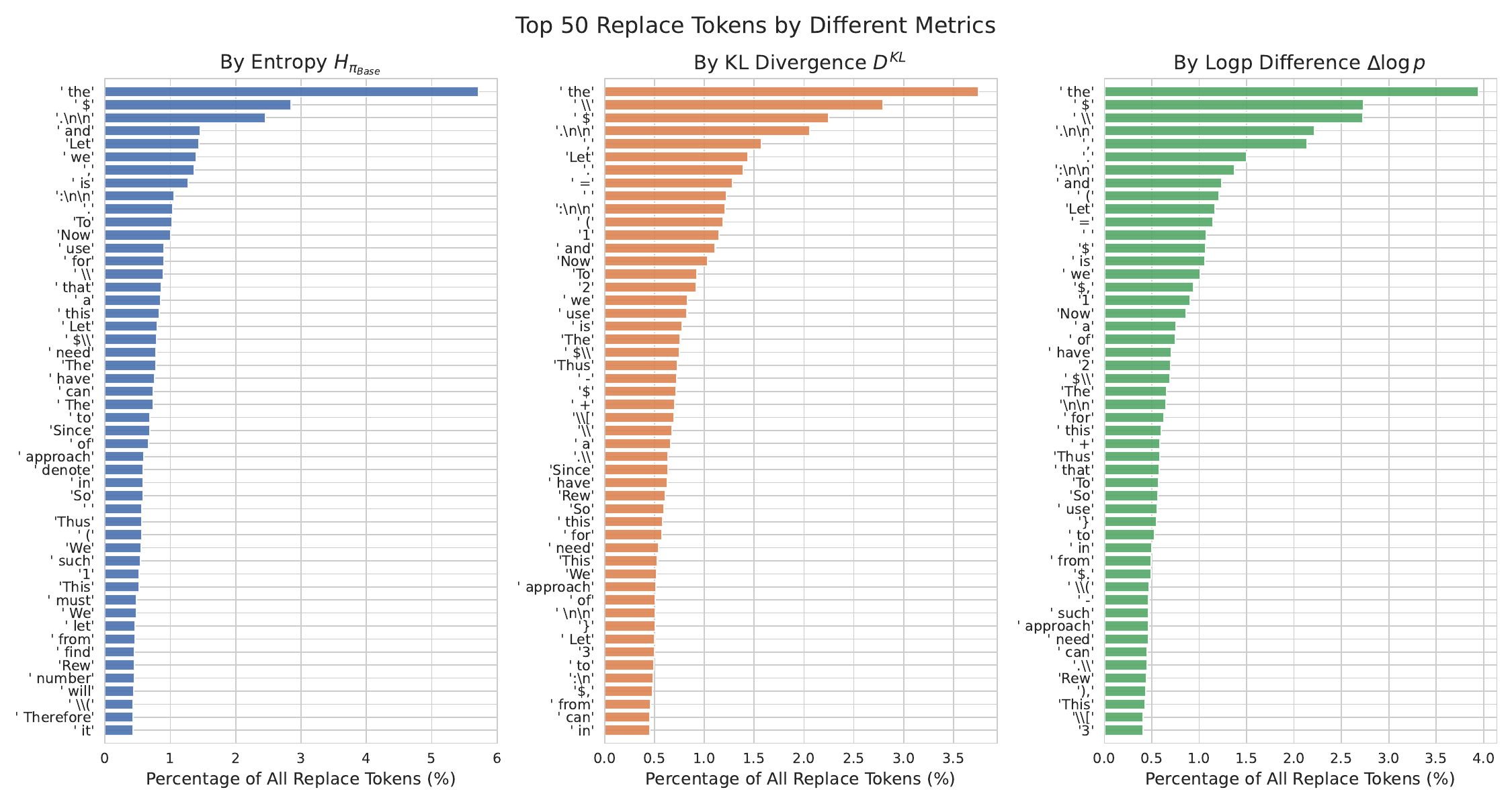}
    \caption{Top 50 tokens for replacing the base model's choice under different metrics' selection.}
    \label{fig:top_tokens}
\end{figure}

\updated{
\textbf{Per-Problem Accuracy during Replacement.}
We also report the per-problem accuracy changes in the token-replacement experiment in Fig.~\ref{fig:per_problem_acc}, to more finely examine how gradually increasing the replacement ratio affects model performance.
We observe that:
(1) There exist some problems that are inherently difficult for the model, for which the accuracy remains zero across all replacement ratios.
(2) For the remaining problems, the overall trend is that accuracy generally increases as the replacement ratio grows, and then begins to fluctuate. This is consistent with the fact that, when only performing token replacement, the performance is ultimately capped by the upper bound of the RLVR model.
(3) For a small number of problems, accuracy initially drops when we introduce a small amount of replacement, and then begins to improve as the replacement ratio continues to increase (\eg problem 0 of DAPO).
A qualitative inspection of these cases suggests that, for some of them, a small number of RL-replaced tokens introduce token options that the base model is not familiar with. 
As a result, the base model fails to continue the generation coherently, leading to an initial degradation in accuracy. 
However, when we further increase the replacement ratio, the generation becomes more strongly guided by the RL tokens, and the model’s performance on these problems recovers and improves.
}
\begin{figure}[ht]
    \centering
    \begin{subfigure}[c]{\textwidth}
        \includegraphics[width=\linewidth]{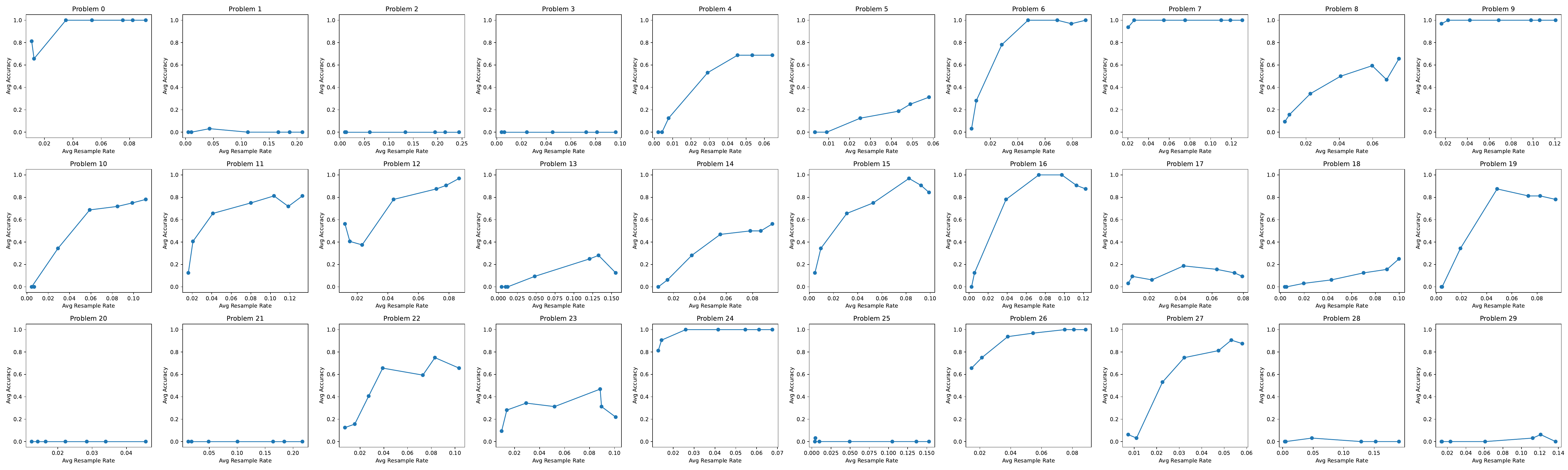}
        \caption{Per-problem accuracy on AIME24 of DAPO's token replacement experiment}
    \end{subfigure}

    \begin{subfigure}[c]{\textwidth}
        \includegraphics[width=\linewidth]{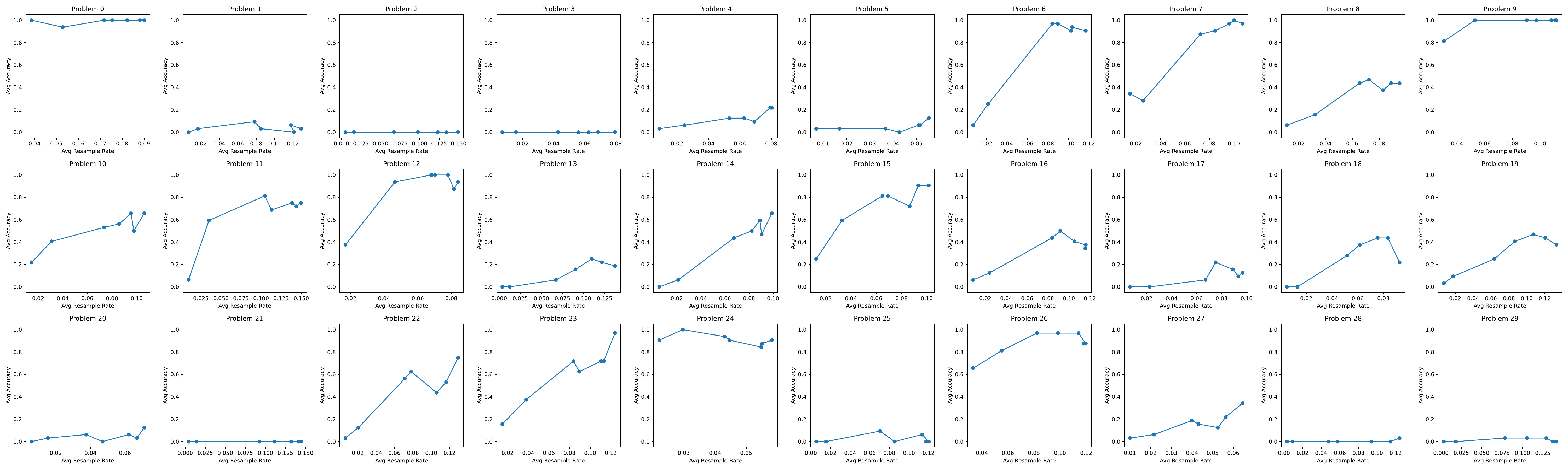}
        \caption{Per-problem accuracy on AIME24 of ORZ's token replacement experiment}
    \end{subfigure}

    \begin{subfigure}[c]{\textwidth}
        \includegraphics[width=\linewidth]{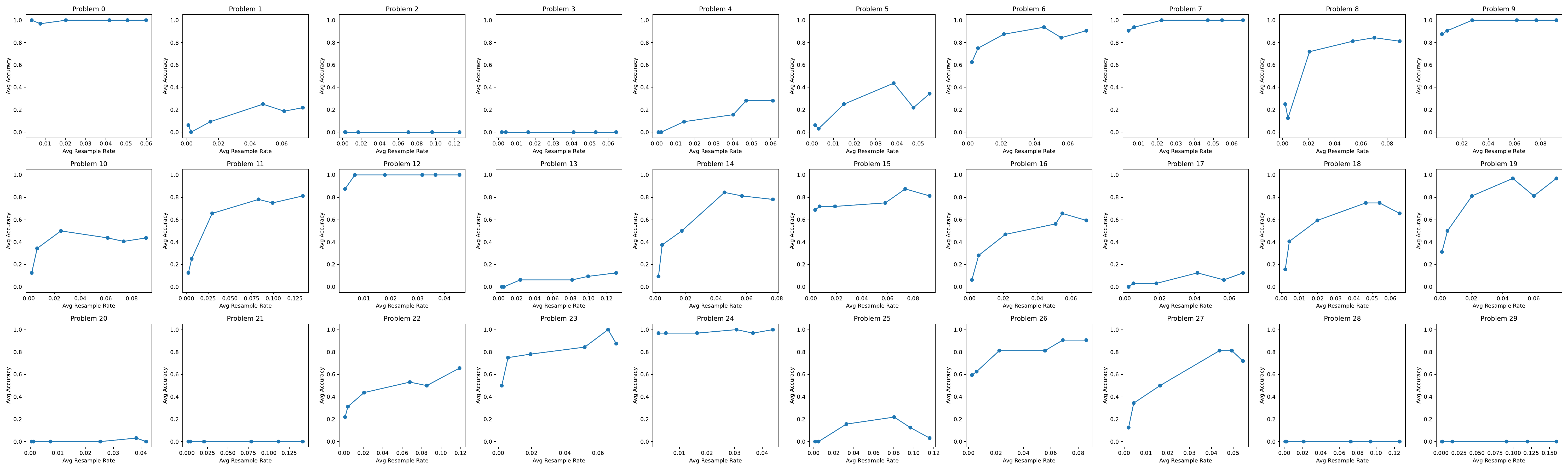}
        \caption{Per-problem accuracy on AIME24 of UniReason's token replacement experiment}
    \end{subfigure}

    \caption{\updated{Per-problem accuracy changes on AIME24 during each model's selective token replacement experiment. We report the results with $\Delta\log p$ being the selection criterion.}}
    \label{fig:per_problem_acc}
\end{figure}

\subsection{\updated{Hyperparameter Sensitivity Analysis}}

\updated{
Our test-time extrapolation distribution $\pi_\mathrm{Extra}^\gamma$ introduces a hyperparameter $\gamma$ that determines the strength of extrapolation along the learned $\Delta\log p$ direction.
This intervention operates within the token replacement procedure (Algo.~\ref{alg:replace}) and is applied only to tokens selected by the criterion $\Delta\log p < \tau$.
To verify the robustness of the performance gain of extrapolation over simply replacing the token from $\pi_\mathrm{RL}$, we perform a grid search over both $\gamma$ and the token-selection threshold $\tau$.
We evaluate $\gamma\in\{0.05, 0.1\}$ and vary $\tau$ across different ranges for different models.
For DAPO and ORZ, we test $\tau\in\{-0.5, -0.4, -0.3, -0.2, -0.1\}$.
For UniReason, we adopt a denser grid $\tau\in\{-0.5, -0.45, -0.4, -0.35, -0.3\}$ because relatively few replacements are needed to reach the RLVR performance level (Fig.~\ref{fig:replace}).
}

\updated{
As shown in Tab.~\ref{tab:full_param_extra}, across nearly all models and hyperparameter settings, extrapolation consistently outperforms the replace-only variant, demonstrating a strong robustness of our method. 
Notably, once the replacement ratio is sufficiently high to match the RLVR's performance, further increases in replacement provide little to no additional benefit, since the performance is bounded by the RLVR model itself. 
In contrast, a proper test-time extrapolation can further exceed RLVR performance by 1–3 points without any additional training.
}
\begin{table}[ht]
\centering
\caption{\updated{Hyperparameter sensitivity analysis for the selective extrapolation experiment. The $^*$sign marks the reported value for extrapolation results in Fig.~\ref{fig:extrapolate}, while the $^\dagger$sign corresponds to the end point in token replacement of Fig.~\ref{fig:replace}.}}
\label{tab:full_param_extra}
\begin{subtable}{\textwidth}
\centering
\caption{Hyperparameters and Avg@32 performance on AIME24 of DAPO (Avg@32 of $\pi_\mathrm{RL}$: 52.60).}
\begin{tabular}{@{}cccccc@{}}
\toprule
\textbf{Threshold $\tau$} & -0.5 & -0.4 & -0.3 & -0.2 & -0.1 \\ 
\textit{Average Replace ratio} & 8.8\% & 10.0\% & 11.4\% & 13.4\% & 16.5\% \\ \midrule
\textbf{Replace w/ $\pi_\mathrm{RL}$} & {\ul 51.98$^\dagger$} & 51.56 & 51.67 & 52.71 & 51.98 \\
\textbf{Extrapolate w/ $\gamma = 0.05$} & 51.88 & {\ul 53.02} & \textbf{55.42$^*$} & \textbf{54.06} & \textbf{54.9} \\
\textbf{Extrapolate w/ $\gamma = 0.1$} & \textbf{54.17} & \textbf{53.33} & {\ul 53.85} & {\ul 53.85} & {\ul 54.27} \\ \bottomrule
\end{tabular}
\end{subtable}

\vspace{1em}

\begin{subtable}{\textwidth}
\centering
\caption{Hyperparameters and Avg@32 performance on AIME24 of ORZ (Avg@32 of $\pi_\mathrm{RL}$: 46.15).}
\begin{tabular}{@{}cccccc@{}}
\toprule
\textbf{Threshold $\tau$} & -0.5 & -0.4 & -0.3 & -0.2 & -0.1 \\
\textit{Average Replace ratio} & 9.5\% & 10.1\% & 10.8\% & 11.6\% & 12.7\% \\ \midrule
\textbf{Replace w/ $\pi_\mathrm{RL}$} & 43.65 & 43.33 & \textbf{46.15$^\dagger$} & 44.90 & 42.81 \\
\textbf{Extrapolate w/ $\gamma = 0.05$} & \textbf{47.19} & {\ul 45.52} & {\ul 45.83} & {\ul 46.25} & {\ul 43.44} \\
\textbf{Extrapolate w/ $\gamma = 0.1$} & {\ul 43.75} & \textbf{47.50$^*$} & 45.52 & \textbf{47.08} & \textbf{45.42} \\ \bottomrule
\end{tabular}
\end{subtable}

\vspace{1em}

\begin{subtable}{\textwidth}
\centering
\caption{Hyperparameters and Avg@32 performance on AIME24 of UniReason (Avg@32 of $\pi_\mathrm{RL}$: 54.58).}
\begin{tabular}{@{}cccccc@{}}
\toprule
\textbf{Threshold $\tau$} & -0.5 & -0.45 & -0.4 & -0.35 & -0.3 \\
\textit{Average Replace ratio} & 5.4\% & 6.0\% & 6.8\% & 7.5\% & 8.5\% \\ \midrule
\textbf{Replace w/ $\pi_\mathrm{RL}$} & {\ul 53.65$^\dagger$} & 53.33 & 53.12 & 54.06 & 53.54 \\
\textbf{Extrapolate w/ $\gamma = 0.05$} & 51.88 & \textbf{54.79} & {\ul 53.54} & {\ul 55.00} & {\ul 54.69} \\
\textbf{Extrapolate w/ $\gamma = 0.1$} & \textbf{54.37} & {\ul 53.75} & \textbf{53.96} & \textbf{55.83$^*$} & \textbf{55.10} \\ \bottomrule
\end{tabular}
\end{subtable}
\end{table}

\section{RLVR Training Details}

\textbf{Hyperparameters Setting.}
We adopt the open-sourced \href{https://github.com/verl-project/verl/tree/v0.5.0/recipe/dapo}{DAPO recipe} for RLVR training.
Our configuration includes double clip ratios ($\epsilon_\mathrm{low}=0.2$ and $\epsilon_\mathrm{high}=0.28$) and a learning rate of 1e-6 with a 10-step warmup.
Each RLVR step consists of 512 prompts with 16 sampled responses each, processed in mini-batches of 32 prompts to yield 16 gradient updates per step. 
Maximum generation length (and overlong penalty thresholds) are set to 8k (4k) for Qwen2.5-Math-7B and 20k (16k) for Qwen3-8b-base, respectively.

For reweighting, our parameter $\alpha$ (Eq.~\ref{eqn:reweight}) is set to 0.2 for Qwen2.5 and 0.1 for Qwen3. 
Following the recommended values by \citet{decomposing_zhaoxin} and \citet{over_dominate}, we set $\alpha$ to $0.1$ for $\tilde A_{i,t}^\mathrm{dom}$ and $0.01$ for $\tilde A_{i,t}^\mathrm{PPL}$.
For $\tilde A_{i,t}^\mathrm{dom}$ specifically, we also adjust $\epsilon_\mathrm{high}$ to $0.24$.

\begin{figure}[htbp]
    \centering
    \includegraphics[width=\linewidth]{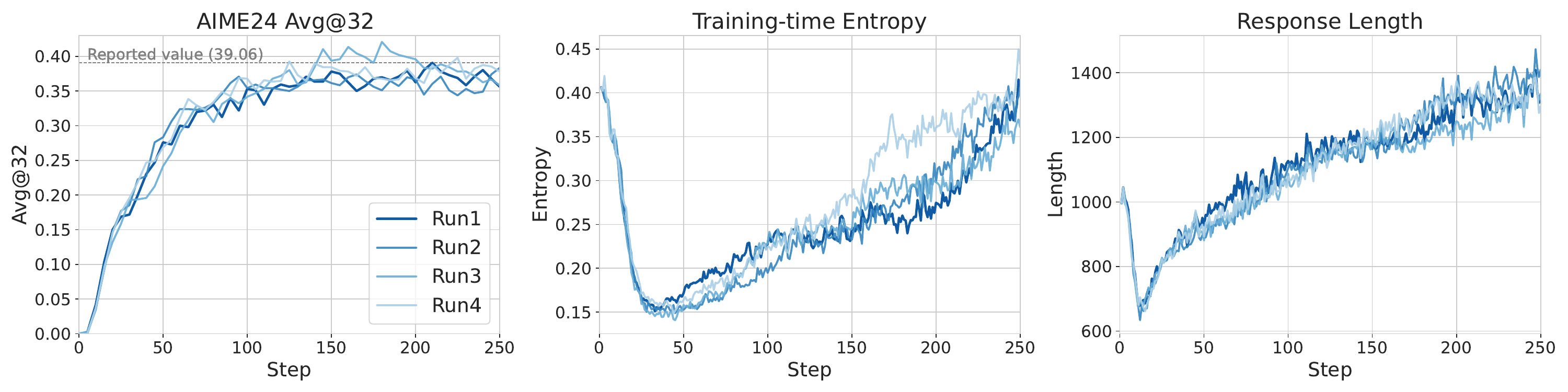}
    \caption{Reproducibility analysis. 
    The learning curves across 4 independent runs on Qwen2.5-Math-7B with our reweighting method (Eq.~\ref{eqn:reweight}) show consistent convergence and performance.}
    \label{fig:replicate}
\end{figure}

\textbf{Reproducibility Analysis.}
To account for random variations in the RL process, we also performed four separate training runs on the Qwen2.5-Math-7B backbone for our reweighting method. 
Fig.~\ref{fig:replicate} displays the learning curves for these experiments (Run 1--4, where Run 1 is our reported run in Fig.~\ref{fig:training_curve}). 
The results indicate that our method is highly reproducible; across all trials, the model reached or surpassed the performance levels presented in Tab.~\ref{tab:compare}.

\section{\updated{Performance beyond Pure-Math Reasoning Tasks}}

\updated{
Although our models are primarily trained and evaluated on math-focused datasets, it is important to assess their reasoning ability on non-math tasks to evaluate generalization ability.
Following prior work \citep{GMPO}, we use the Minerva dataset \citep{minerva}, which contains 272 undergraduate-level STEM problems spanning diverse subjects such as Chemistry and Astronomy\footnote{The dataset is named as \texttt{OCWCourses} in the paper, which can be found in \url{https://openreview.net/attachment?id=IFXTZERXdM7&name=supplementary_material}.}.
}

\updated{
We begin by benchmarking the RLVR-trained models on Minerva using the same sampling parameters as in other evaluations (\eg AIME24).
As shown in Tab.~\ref{tab:rl_minerva}, models trained with our reweighting method continue to outperform baselines in reasoning accuracy. Importantly, these gains do not come at the expense of exploration ability, as reflected by comparable or improved Pass@k scores.
}

\updated{
We further evaluate test-time extrapolation on Minerva. 
Because Minerva is substantially larger than AIME24 (around 7 times more questions), we report Avg@8 for the evaluated 14B–32B models.
As shown in Fig.~\ref{fig:extra_minerva}, test-time extrapolation consistently improves over the RLVR model’s accuracy, validating its generalization ability beyond pure-math datasets.
We also report the hyperparameter grids in Tab.~\ref{tab:minerva_param_extra}, where the extrapolation results also consistently outperform replacing with $\pi_\mathrm{RL}$ only.
}

\begin{table}[ht]
\centering
\caption{\updated{Performance of RLVR-trained models on Minerva.}}
\label{tab:rl_minerva}
\begin{subtable}{\textwidth}
\centering
\caption{On Qwen2.5-Math-7B}
\begin{tabular}{@{}cccccc@{}}
\toprule
\textbf{Method} & \textbf{Base} & \textbf{DAPO} & \textbf{PPL} & \textbf{Dominate} & \textbf{Ours} \\ \midrule
\textbf{Avg@32} & 18.35 & 46.43 & {\ul 48.68} & 47.01 & \textbf{49.72} \\
\textbf{Pass@16} & 61.04 & {\ul 69.44} & 68.69 & 64.59 & \textbf{70.37} \\ \bottomrule
\end{tabular}
\end{subtable}

\vspace{1em}

\begin{subtable}{\textwidth}
\centering
\caption{On Qwen3-8B-Base}
\begin{tabular}{@{}cccc@{}}
\toprule
\textbf{Method} & \textbf{Base} & \textbf{DAPO} & \textbf{Ours} \\ \midrule
\textbf{Avg@32} & 29.8 & {\ul 55.04} & \textbf{56.57} \\
\textbf{Pass@16} & 70.43 & \textbf{76.98} & {\ul 76.78} \\ \bottomrule
\end{tabular}
\end{subtable}
\end{table}

\begin{figure}[ht]
    \centering
    \includegraphics[width=0.5\linewidth]{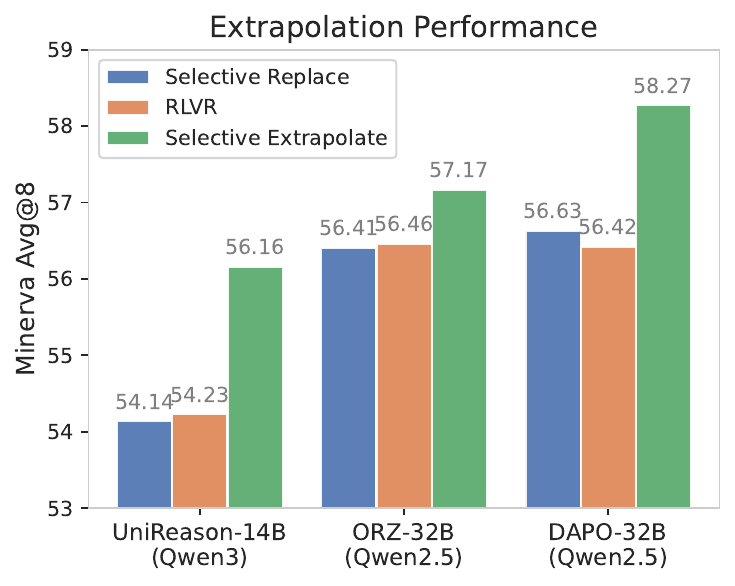}
    \caption{\updated{Extrapolation results on Minerva.}}
    \label{fig:extra_minerva}
\end{figure}

\begin{table}[ht]
\centering
\caption{\updated{Hyperparameters and Avg@8 performance on Minerva benchmark. The $^*$sign marks the tuned value in Fig.~\ref{fig:extra_minerva}.}}
\label{tab:minerva_param_extra}
\begin{tabular}{@{}c|cc|cc|cc@{}}
\toprule
 & \multicolumn{2}{c|}{DAPO} & \multicolumn{2}{c|}{ORZ} & \multicolumn{2}{c}{UniReason} \\ \midrule
\textbf{Threshold $\tau$} & -1.0 & -0.9 & -1.0 & -0.9 & -1.0 & -0.9 \\
\textit{Avg replace ratio} & 6.5\% & 7.0\% & 9.2\% & 9.6\% & 1.8\% & 2.2\% \\ \midrule
\textbf{Replace w/ $\pi_\mathrm{RL}$} & 56.63 & 56.43 & 56.41 & 56.39 & 54.00 & 54.14 \\
\textbf{Extrapolate w/ $\gamma=0.05$} & 56.8 & \textbf{57.22} & \textbf{57.17$^*$} & \textbf{57.08} & \textbf{54.50} & 54.50 \\
\textbf{Extrapolate w/ $\gamma=0.1$} & \textbf{58.27$^*$} & 56.57 & 55.51 & 55.28 & 54.32 & \textbf{56.16$^*$} \\ \bottomrule
\end{tabular}
\end{table}

\section{Proofs}

\begin{proof}[Proof of Lemma~\ref{lemma:grad}]\label{proof:grad}
For ease of notation, we omit the context $x,y_{i,<t}$ here.
The derivative of DAPO on an unclipped token $y_{i,t}$ is:
$$
\begin{aligned}
\nabla_\theta \mathcal J_\mathrm{DAPO}(y_{i,y}) = \nabla_\theta\  r_{i,t}(\theta)\hat A_{i,t} 
&= \nabla_\theta\ \frac{\pi_\theta(y_{i,t})}{\pi_{\theta_\mathrm{old}}(y_{i,t})}\hat A_{i,t} \\
&= r_{i,t}(\theta)\hat A_{i,t}\cdot\nabla_\theta\log\pi_\theta(y_{i,t}) \\
&= w_{i,t}\cdot\nabla_\theta\log\pi_\theta(y_{i,t}).
\end{aligned}
$$
For the softmax-parameterized policy $\pi_\theta$ with logits $z$ for $y_{i,t}$, assuming $y_{i,t}$ corresponds to index $k$ of vocabulary $\mathcal V$, we have:
$$
\begin{aligned}
\frac{\partial}{\partial z_j} \log\pi_\theta(y_{i,t}) &= \frac1{\pi_\theta(y_{i,t})} \cdot \frac{\partial}{\partial z_j}\frac{\exp(z_k)}{\sum_{l}\exp(z_l)} \\
&= \frac1{\pi_\theta(y_{i,t})} \cdot \left\{
\begin{array}{ll}
\frac{\exp(z_k)\sum_{l}\exp(z_l) - \exp(z_k)\exp(z_k)}{(\sum_{l}\exp(z_l))^2}, & j=k \\
\frac{-\exp(z_k)\exp(z_j)}{(\sum_{l}\exp(z_l))^2}, & j\neq k
\end{array}
\right.\\
&= \left\{
\begin{array}{ll}
1 - \pi_\theta(\mathcal V_k), & j=k \\
-\pi_\theta(\mathcal V_j), & j\neq k
\end{array}
\right. \\
&= \mathbb I(j=k) - \pi_\theta(\mathcal V_j).
\end{aligned}
$$
So the $\ell1$-norm of $\nabla_z \mathcal J_\mathrm{DAPO}(y_{i,t})$ becomes:
$$
\begin{aligned}
\left\Vert\nabla_z \mathcal J_\mathrm{DAPO}(y_{i,t})\right\Vert_1 &= \left\Vert w_{i,t}\nabla_z\log\pi_\theta(y_{i,t})\right\Vert_1\\
&= |w_{i,t}|\cdot \sum_j\Big|\mathbb I(j=k) - \pi_\theta(\mathcal V_j)\Big| \\
&= |w_{i,t}|\cdot \Big(1-\pi_\theta(y_{i,t}) + \sum_{j\neq k} \pi_\theta(\mathcal V_j)\Big) \quad (y_{i,t} = \mathcal V_k)\\
&= |w_{i,t}|\cdot 2\big(1-\pi_\theta(y_{i,t})\big).
\end{aligned}
$$
\end{proof}

\begin{proof}[Proof of Theorem~\ref{theorem:extra}]\label{proof:extra}
Let $\mathcal J(\theta_x) = \mathbb E_{y\sim\pi_{\theta_x}(\cdot)}[R_{x,y}]$, and we need to show that for each $x$:
$$\exists\ \gamma>0, \mathcal J(\theta^t_x+\gamma(\theta^t_x-\theta^0_x)) \ge \mathcal J(\theta^t_x).$$
Denote the extrapolation direction as $d^t_x = \theta^t_x - \theta^0_x$, this is equivalent to showing the directional derivative of $\mathcal J$ at $\theta^t_x$ along $d^t_x$ is positive.

The directional derivative is given by:
$$
\nabla_{d_x^t}\mathcal J(\theta^t) = \nabla_{\theta_x}\mathcal J(\theta_x^t)^\top \frac{d_x^t}{\Vert d_x^t\Vert} = \frac{1}{\Vert d_x^t\Vert}\cdot\sum_y \frac{\partial\mathcal J(\theta_x^t)}{\partial \theta_{x,y}} d_{x,y}^t.
$$
For the softmax policy $\pi_{\theta_x}(y) = \exp(\theta_{x,y})/\sum_{y'}\exp(\theta_{x,y'})$, its gradient satisfies:
$$
\frac{\partial \pi_{\theta_x}(y')}{\partial \theta_{x,y}} = \pi_{\theta_x}(y')\left(\mathbb I(y=y') - \pi_{\theta_x}(y)\right).
$$
So the partial gradient of $\mathcal J$ on $y$ is:
$$
\frac{\partial\mathcal J(\theta_x)}{\partial \theta_{x,y}} = \sum_{y'}R_{x,y'}\frac{\partial \pi_{\theta_x}(y')}{\partial \theta_{x,y}} = R_{x,y}\pi_{\theta_x}(y) - \pi_{\theta_x}(y)\sum_{y'}R_{x,y'}\pi_{\theta_x}(y') = \pi_{\theta_x}(y)(R_{x,y} - \pi_{\theta_x}^\top R_x).
$$
Note that the advantage is $A^t(x,y) = R_{x,y} - \pi_{\theta_x^t}^\top R_x$ under the bandit setting, the directional derivative thus becomes:
$$
\begin{aligned}
\nabla_{d_x^t}\mathcal J(\theta^t) &= \frac{1}{\Vert d_x^t\Vert}\cdot\sum_y \pi_{\theta_x^t}(y)(R_{x,y} - \pi_{\theta_x^t}^\top R_x)d_{x,y}^t \\
&= \frac{1}{\Vert d_x^t\Vert}\cdot\sum_a \pi_{\theta_x^t}(y)\cdot A^t(x,y)\cdot d_{x,y}^t 
\end{aligned}
$$

We now analyze the order of $A^t(x,y)$ and $d_{x,y}^t$.

Under the assumed bandit setting, the order of $A^t(x,y)$ is the same as the order of $R_{x,y}$, i.e., $A^t(x,y_1) > A^t(x,y_2)$ if and only if $R_{x,y_1} > R_{x,y_2}$.
For $d_{x,y}^t$, we can prove that its order is also the same as $R_{x,y}$ with induction.

At $t=1$, using the update rule of NPG, we have:
$$d_{x,y}^1 - d_{x,y'}^1 = \eta\cdot (A^0(x,y) - A^0(x,y')) = \eta\cdot (R_{x,y} - R_{x,y'}).$$
So the order of $d_{x,y}^1$ is the same as $R_{x,y}$.
Assume at iteration $t$, the order of $d_{x,y}^t$ is the same as $R_{x,y}$, then at iteration $t+1$, we have:
$$d_{x,y}^{t+1} - d_{x,y'}^{t+1} = d_{x,y}^t - d_{x,y'}^t + \eta\cdot (A^t(x,y) - A^t(x,y')) = d_{x,y}^t - d_{x,y'}^t + \eta\cdot (R_{x,y} - R_{x,y'}).$$
So we still have $d_{x,y}^{t+1} > d_{x,y'}^{t+1} \iff R_{x,y} > R_{x,y'}$.
Thus by induction, the order of $d_{x,y}^t$ is the same as $R_{x,y}$ for all $t$.

Since the order of $A^t(x,y)$ and $d_{x,y}^t$ are the same, we can apply the Chebyshev sum inequality to get:
$$
\sum_y\pi_{\theta_x^t}(y) \cdot \sum_y \pi_{\theta_x^t}(y)\cdot A^t(x,y)\cdot d_{x,y}^t \ge \left(\sum_y \pi_{\theta_x^t}(y)\cdot A^t(x,y)\right)\cdot \left(\sum_y \pi_{\theta_x^t}(y)\cdot d_{x,y}^t\right),
$$
with the equality holds if and only if $A^t(x,y)$ or $d_{x,y}^t$ is a constant for all $y$ (i.e., constant reward).

Note that the expectation of advantage $\sum_y \pi_{\theta_x^t}(y)\cdot A^t(x,y) = 0$, so we have:
$$\nabla_{d_x^t}\mathcal J(\theta^t) = \frac{1}{\Vert d_x^t\Vert}\cdot\sum_y \pi_{\theta_x^t}(y)\cdot A^t(x,y)\cdot d_{x,y}^t \ge 0.$$
The equality holds if and only if $R_{x,y}$ is a constant for all $y$.

\end{proof}

\section{Statistical Comparison of Different Metrics}\label{appdix:stat}

\textbf{Empirical setup.}
We evaluate three RLVR models: ORZ, DAPO, UniReason, and their base counterparts.
For each model, we generate 32 responses per question from the AIME-24 dataset, with a sampling strategy of top-p=0.7 and temperature=1.0.
Our analysis focuses on several metrics comparing the model pairs: the base/RLVR model's entropy, KL divergences, and the logp difference.
The probability distribution versus different $\Delta\log p$ bins in Fig.~\ref{fig:probs}\textcolor{red}{\hyperref[fig:probs]{(b)}} is also measured on the DAPO's generation under this setting.

\textbf{Statistics of Different Metrics.}
We compute each metric of the three RLVR model pairs on both the base model and the RLVR model's generation.
As shown in Fig.~\ref{fig:hist_logp_diffs}, the distribution of logp difference $\Delta\log p$ is bimodal, with a positive tail for the RLVR's generated text and a negative tail for the base model's generation.
In contrast, the distributions of other magnitude-based metrics are nearly identical regardless of which model generated the output (Fig.~\ref{fig:hist_unireason_magnitudes}-\ref{fig:hist_orz_magnitudes}).

\updated{
\textbf{Word Clouds of High-$\Delta\log p$ Tokens.}
To gain qualitative insight into the tokens identified as higher $\Delta\log p$, whose probabilities are substantially increased by the RLVR training process, we generated word clouds from the top-100 high-$\Delta\log p$ tokens for each model (Figure~\ref{fig:word_cloud}).
As the figure shows, these tokens correspond to words related to problem-solving. 
They fall into two clear categories: explicit reasoning actions (\eg combine, break, simplify) and logical transitions (\eg wait, think, step). 
The prevalence of this vocabulary suggests that the RLVR model has learned to construct more effective reasoning processes.
}

\begin{figure}
    \centering
    \begin{subfigure}[c]{0.6\textwidth}
        \includegraphics[width=\linewidth]{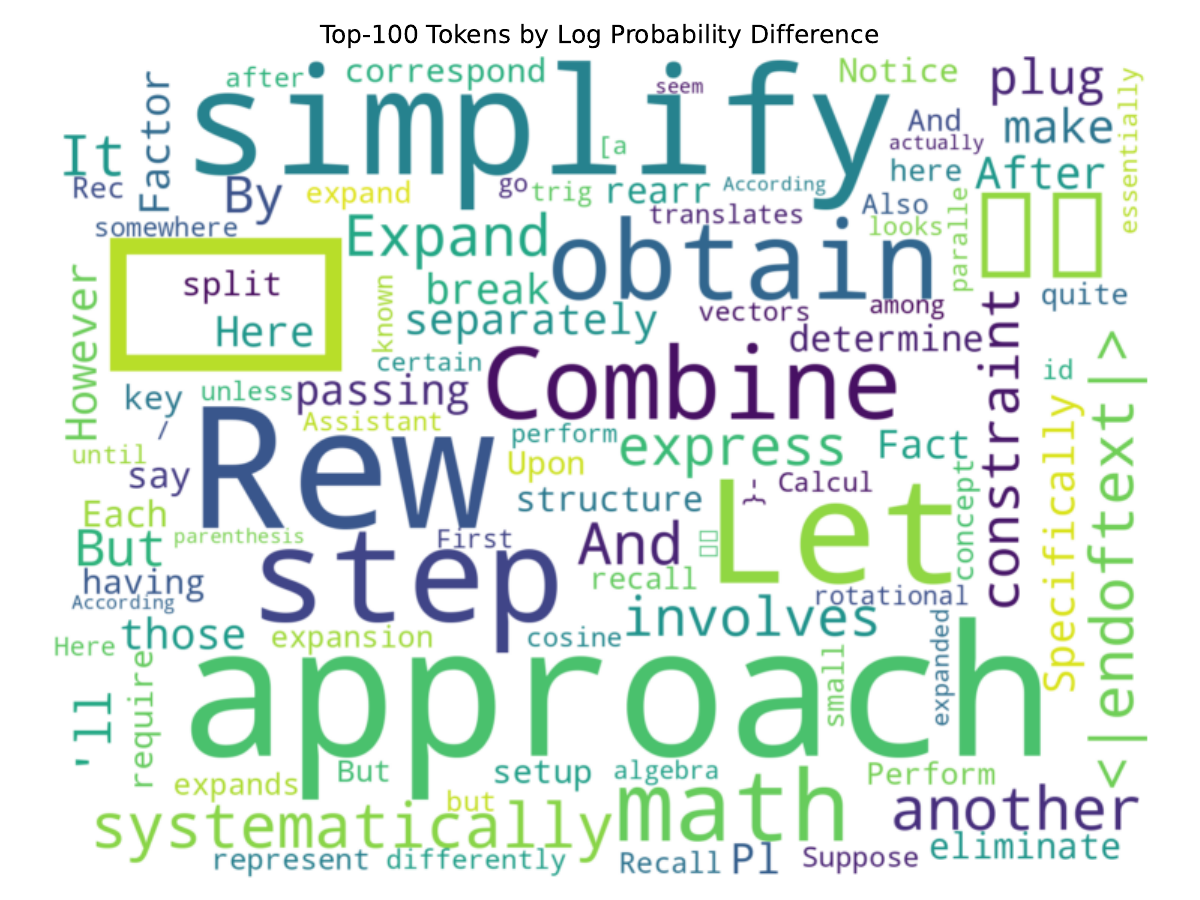}
        \caption{Top $\Delta\log p$ tokens of DAPO}
    \end{subfigure}
    
    \begin{subfigure}[c]{0.6\textwidth}
        \includegraphics[width=\linewidth]{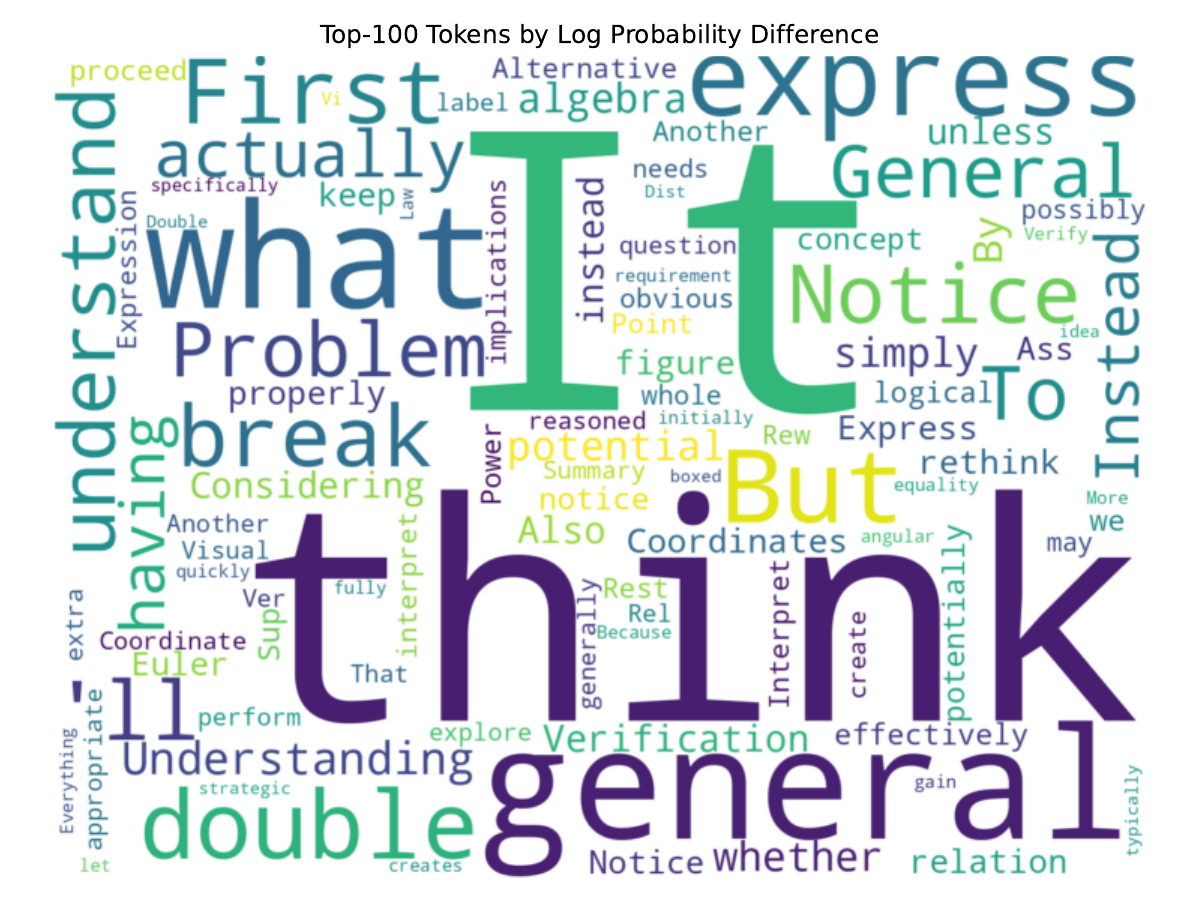}
        \caption{Top $\Delta\log p$ tokens of ORZ}
    \end{subfigure}
    
    \begin{subfigure}[c]{0.6\textwidth}
        \includegraphics[width=\linewidth]{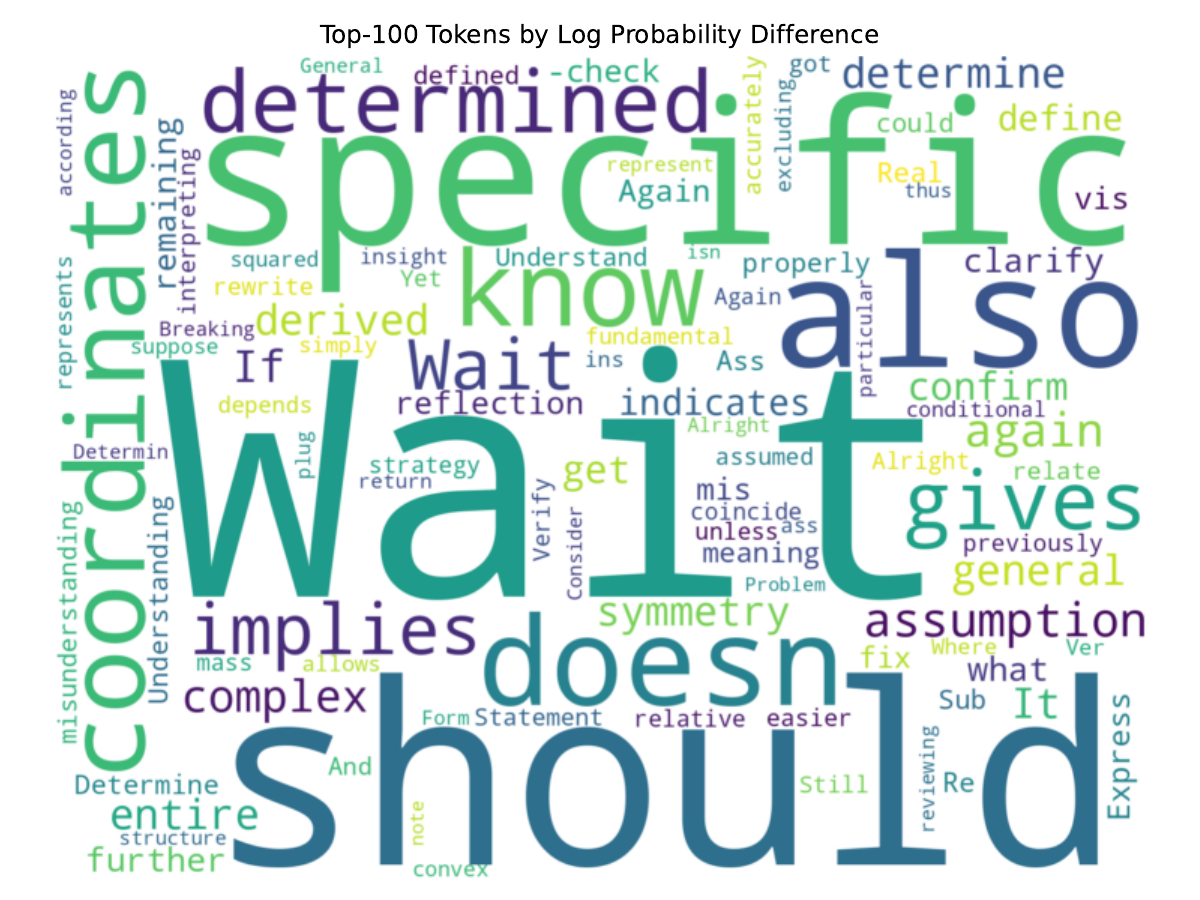}
        \caption{Top $\Delta\log p$ tokens of UniReason}
    \end{subfigure}
    \caption{Word clouds of top $\Delta\log p$ tokens, measured w/ different RLVR-trained models.}
    \label{fig:word_cloud}
\end{figure}

\section{The Use of Large Language Models}
We utilize LLMs only to polish some of the language of this paper.
All content was originally drafted by the authors. 
The use of LLMs was restricted to refining some pre-existing text, and any suggested modifications were reviewed by the authors to confirm their accuracy and alignment with the original meaning.

\begin{figure}[ht]
    \centering
    \begin{subfigure}[c]{0.8\textwidth}
        \includegraphics[width=\linewidth]{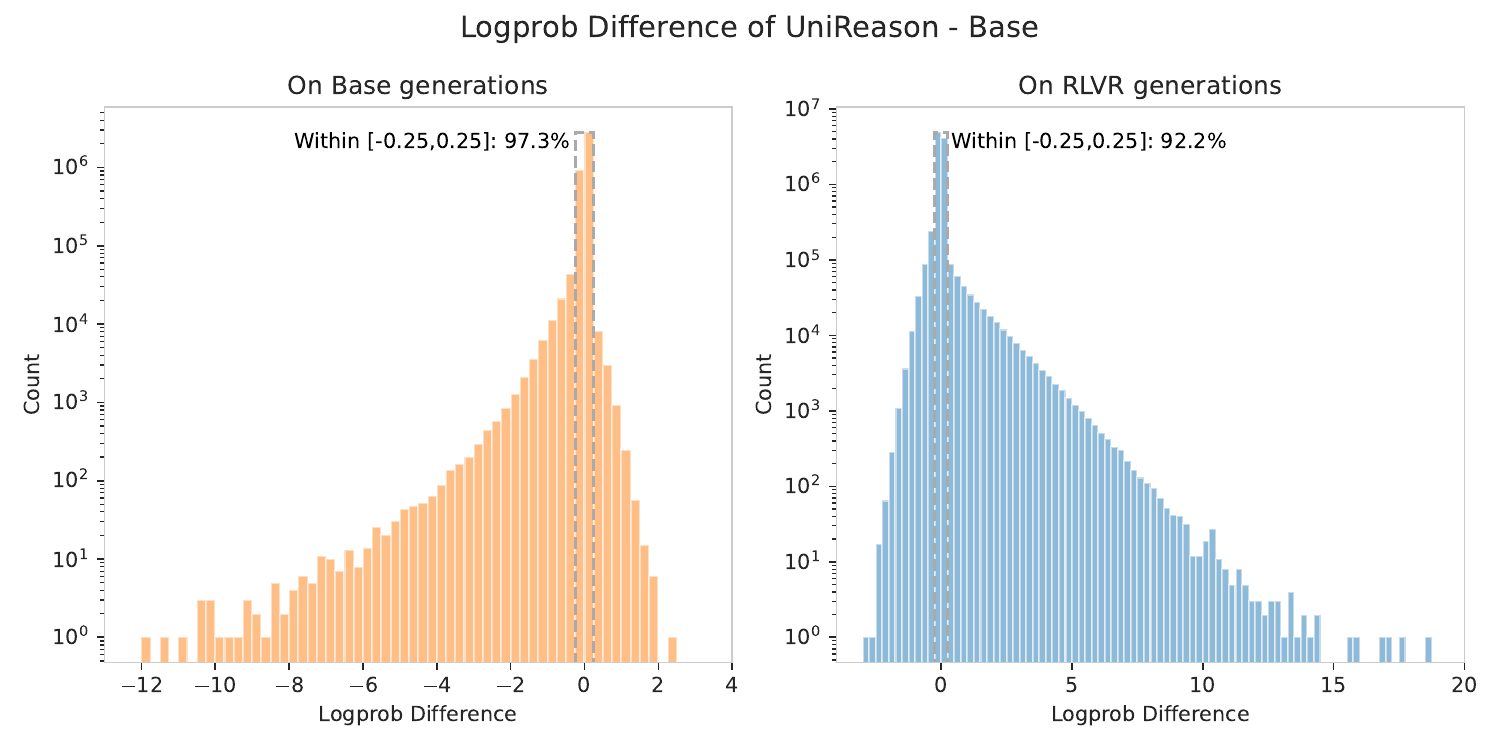}
        \caption{Logp difference of UniReason}
    \end{subfigure}

    \begin{subfigure}[c]{0.8\textwidth}
        \includegraphics[width=\linewidth]{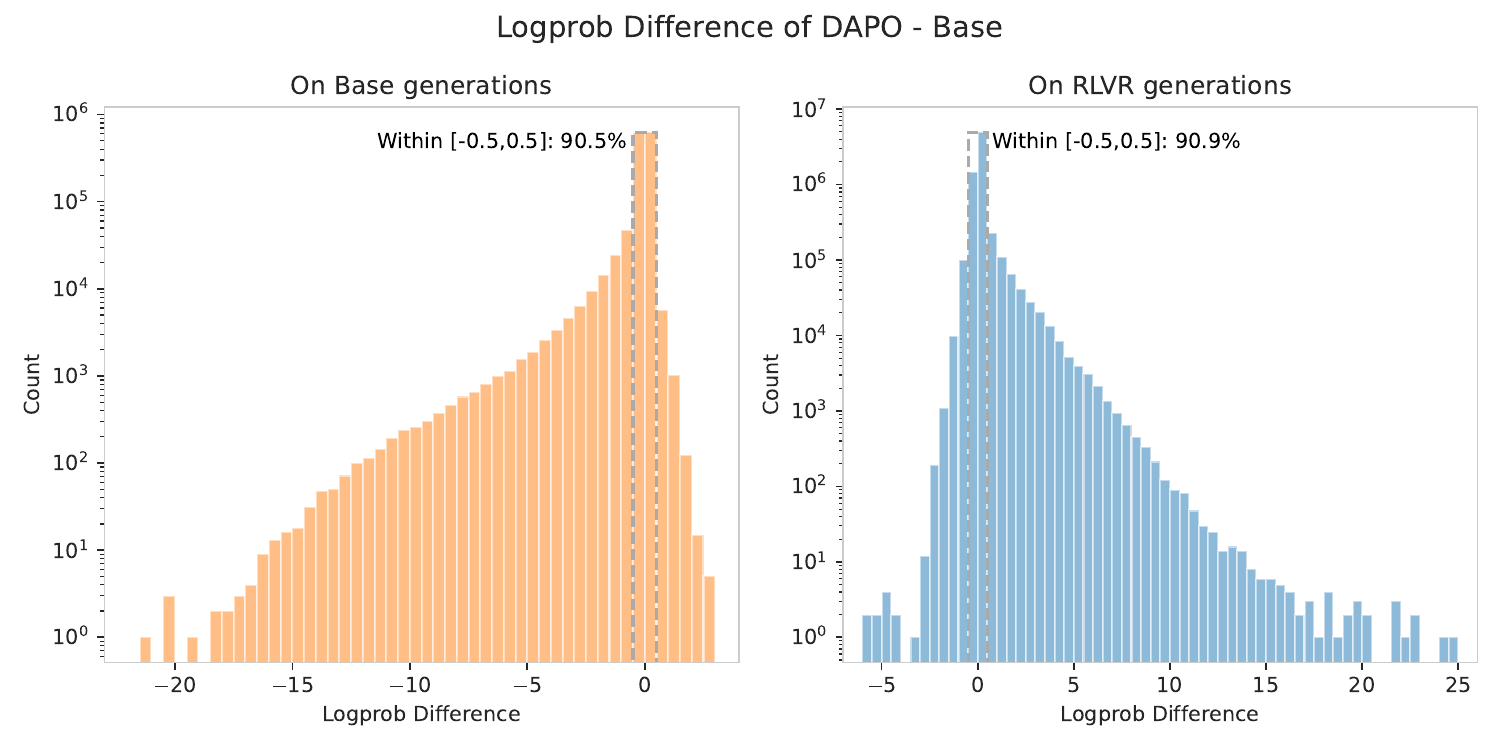}
        \caption{Logp difference of DAPO}
    \end{subfigure}

    \begin{subfigure}[c]{0.8\textwidth}
        \includegraphics[width=\linewidth]{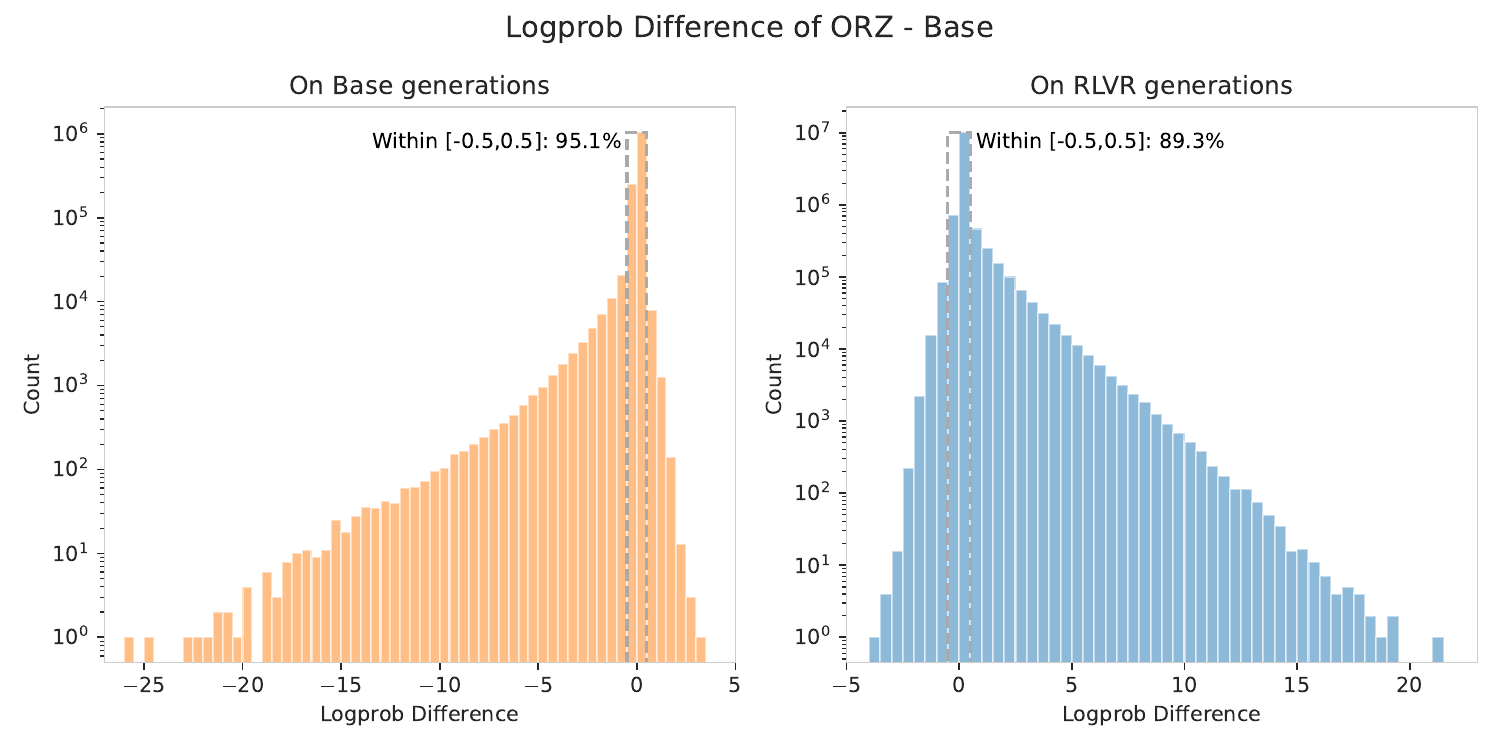}
        \caption{Logp difference of ORZ}
    \end{subfigure}

    \caption{Logp Difference histograms of different RLVR models, comparing the RLVR and base model's generations.}
    \label{fig:hist_logp_diffs}
\end{figure}

\begin{figure}[ht]
    \centering
    \begin{subfigure}[c]{0.6\textwidth}
        \centering
        \includegraphics[width=0.88\linewidth]{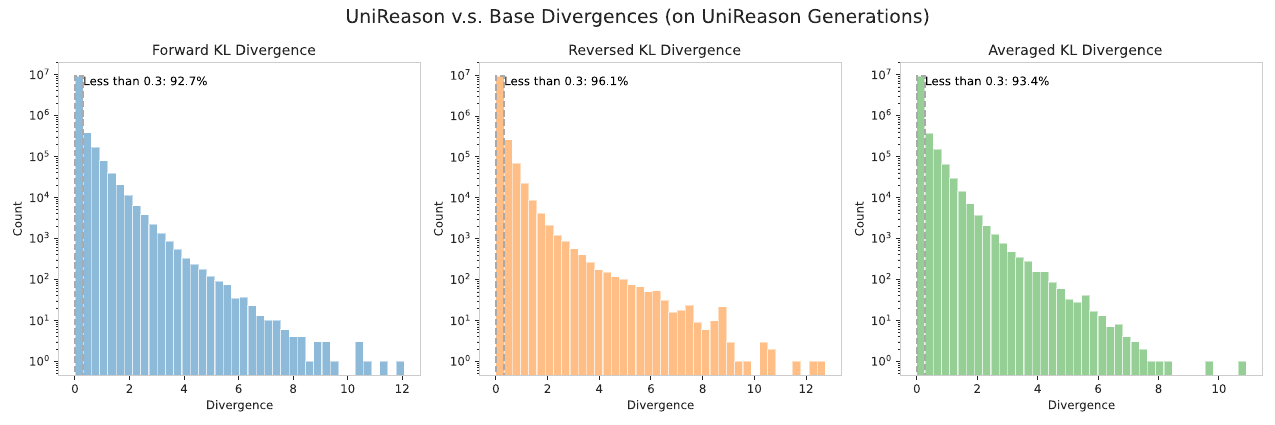}
        \caption{Divergence on UniReason's generations}
    \end{subfigure}
    \hfill
    \begin{subfigure}[c]{0.38\textwidth}
        \centering
        \includegraphics[width=0.9\linewidth]{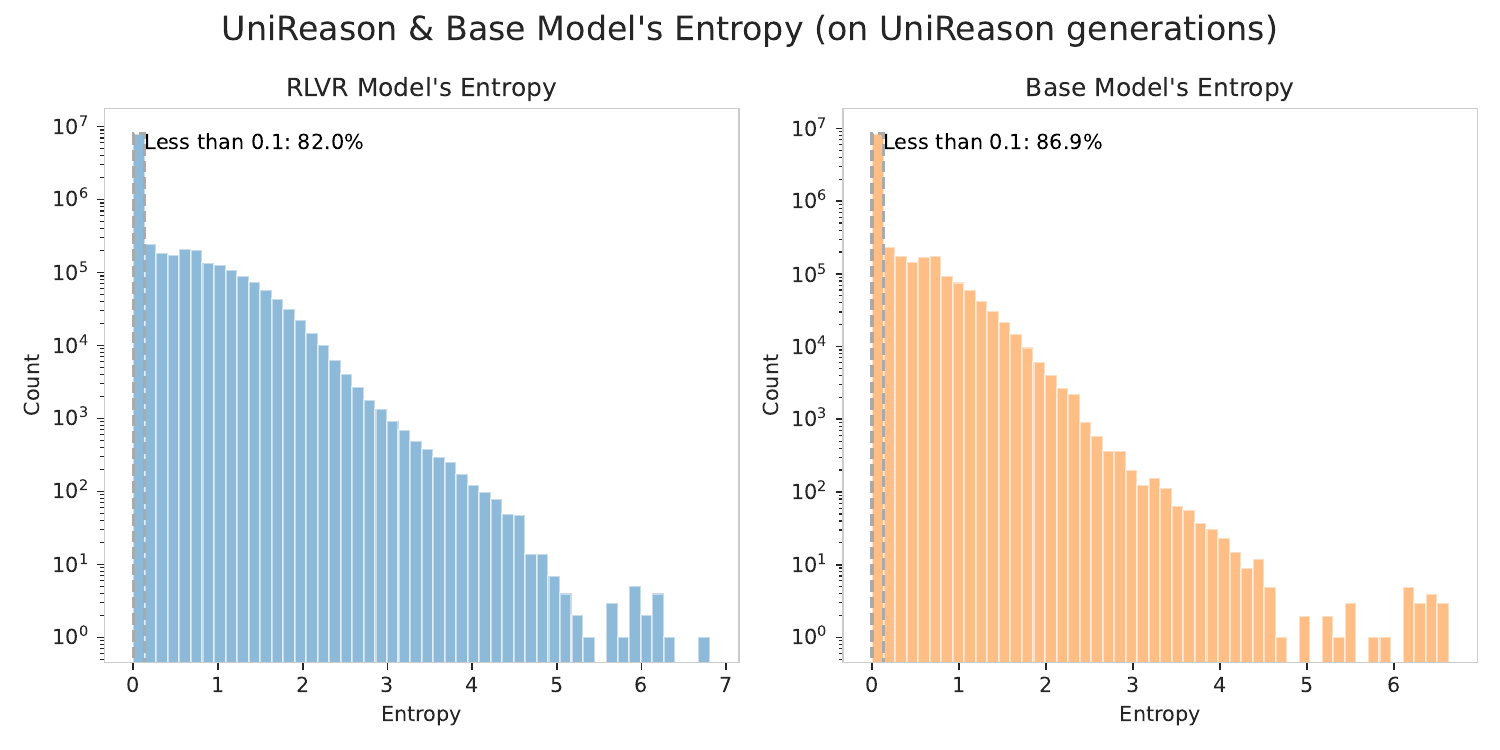}
        \caption{Entropy on UniReason's generations}
    \end{subfigure}
    
    \begin{subfigure}[c]{0.6\textwidth}
        \centering
        \includegraphics[width=0.88\linewidth]{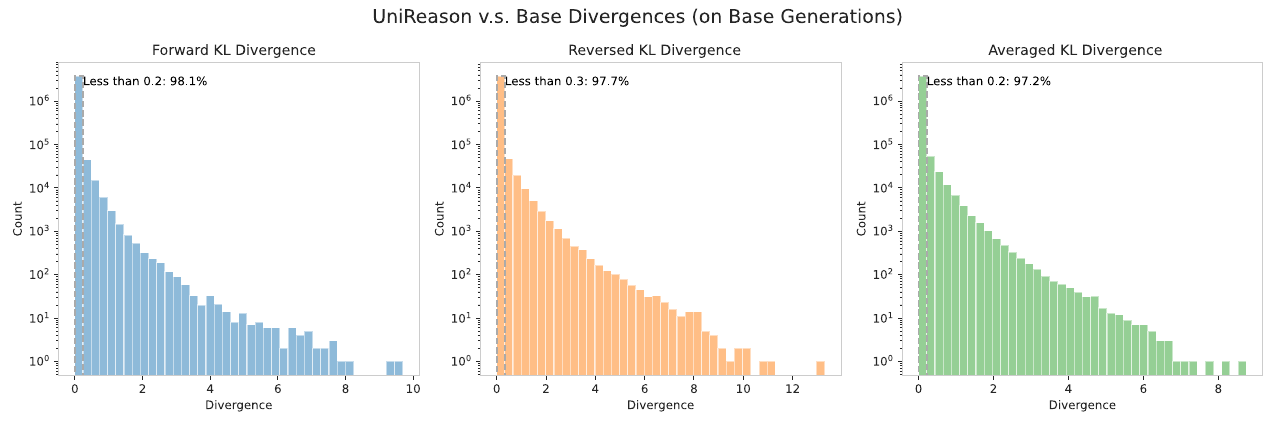}
        \caption{Divergence on base's generations}
    \end{subfigure}
    \hfill
    \begin{subfigure}[c]{0.38\textwidth}
        \centering
        \includegraphics[width=0.9\linewidth]{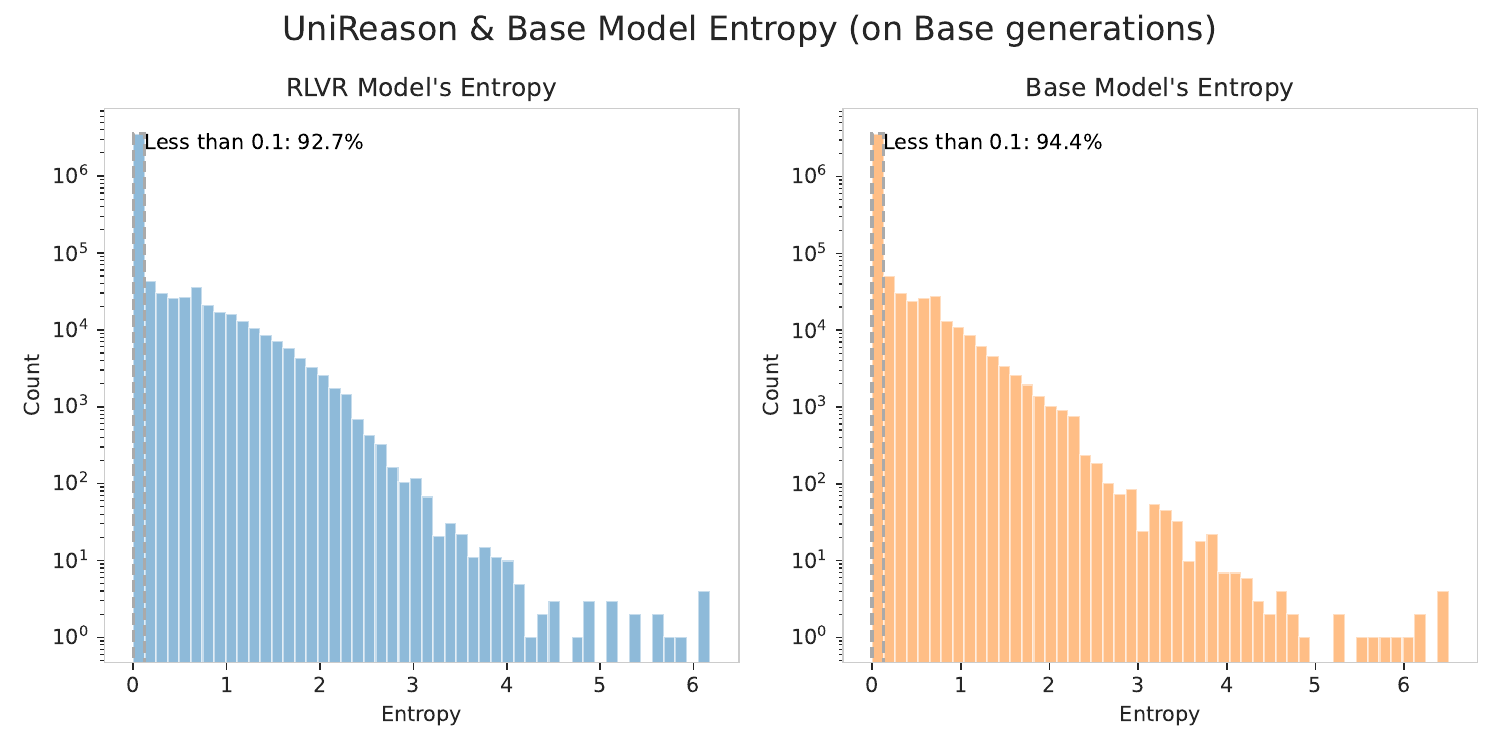}
        \caption{Entropy on base's generations}
    \end{subfigure}

    \caption{Divergence and entropy histograms of UniReason and its corresponding base model measured on UniReason or the base model's generations.}
    \label{fig:hist_unireason_magnitudes}
\end{figure}

\begin{figure}[ht]
    \centering
    \begin{subfigure}[c]{0.6\textwidth}
        \centering
        \includegraphics[width=0.88\linewidth]{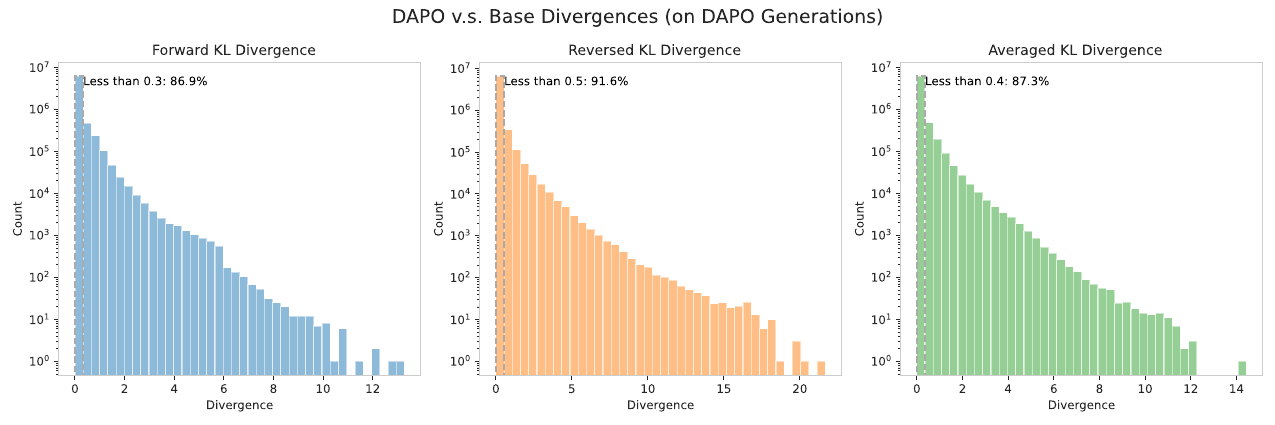}
        \caption{Divergence on DAPO's generations}
    \end{subfigure}
    \hfill
    \begin{subfigure}[c]{0.38\textwidth}
        \centering
        \includegraphics[width=0.9\linewidth]{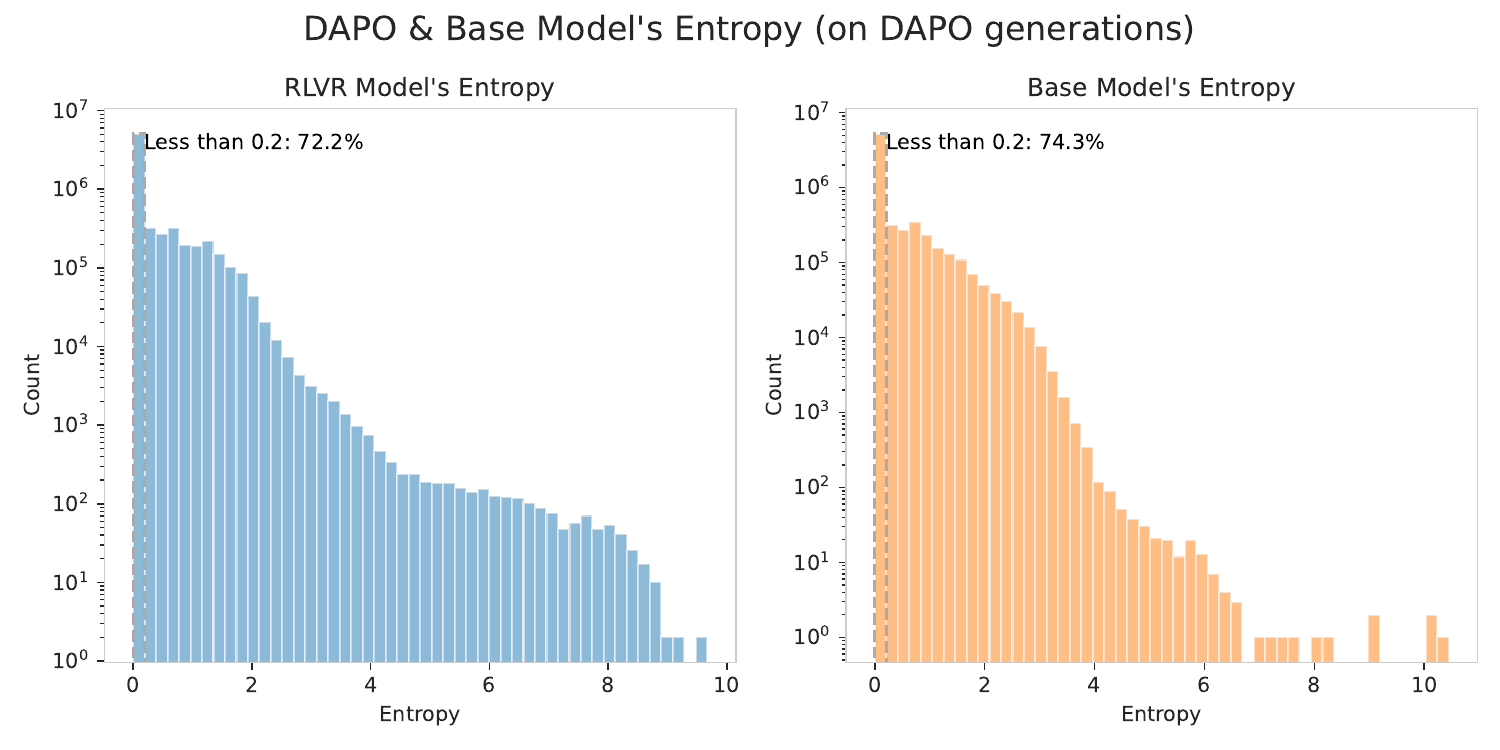}
        \caption{Entropy on DAPO's generations}
    \end{subfigure}
    
    \begin{subfigure}[c]{0.6\textwidth}
        \centering
        \includegraphics[width=0.88\linewidth]{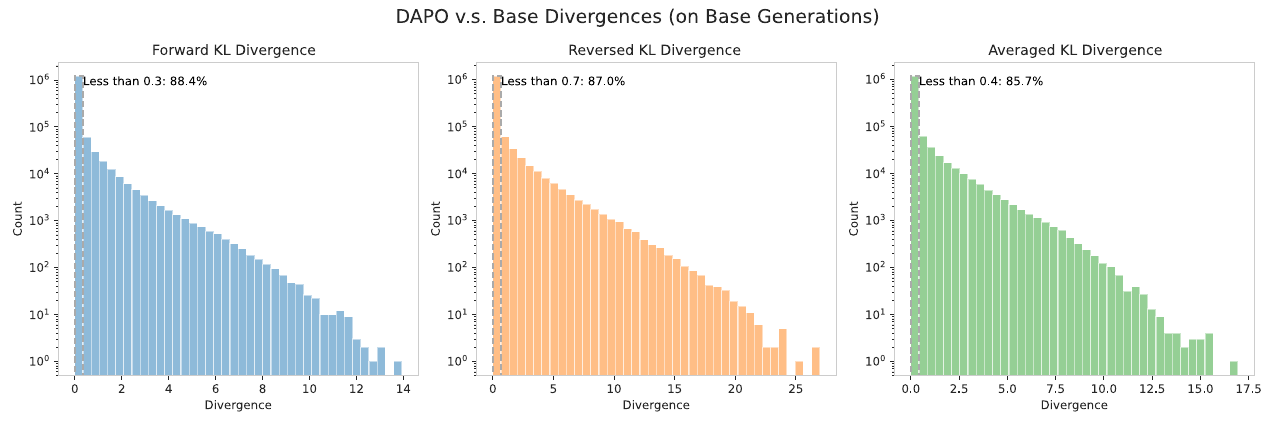}
        \caption{Divergence on base's generations}
    \end{subfigure}
    \hfill
    \begin{subfigure}[c]{0.38\textwidth}
        \centering
        \includegraphics[width=0.9\linewidth]{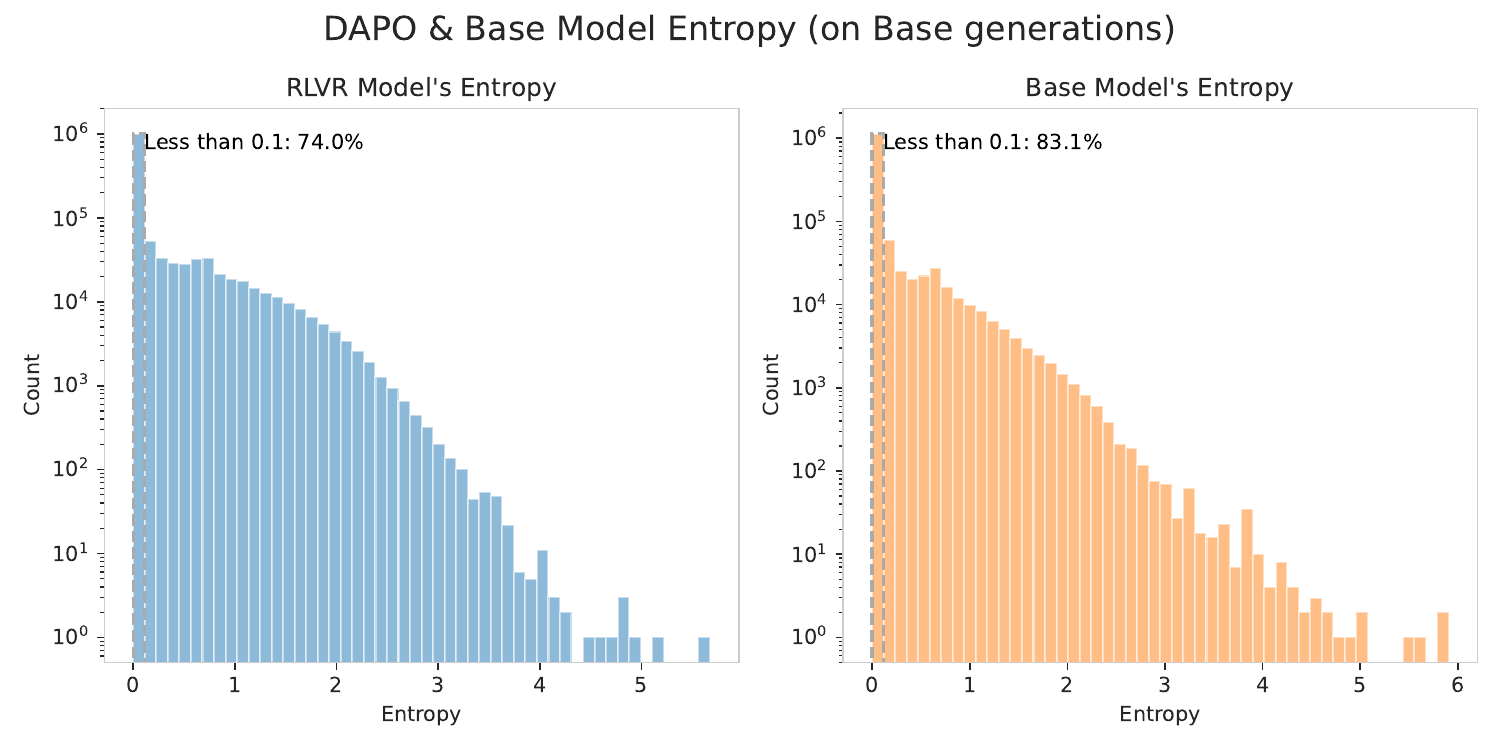}
        \caption{Entropy on base's generations}
    \end{subfigure}

    \caption{Divergence and entropy histograms of DAPO and its corresponding base model measured on DAPO or the base model's generations.}
    \label{fig:hist_dapo_magnitudes}
\end{figure}

\begin{figure}[ht]
    \centering
    \begin{subfigure}[c]{0.6\textwidth}
        \centering
        \includegraphics[width=0.88\linewidth]{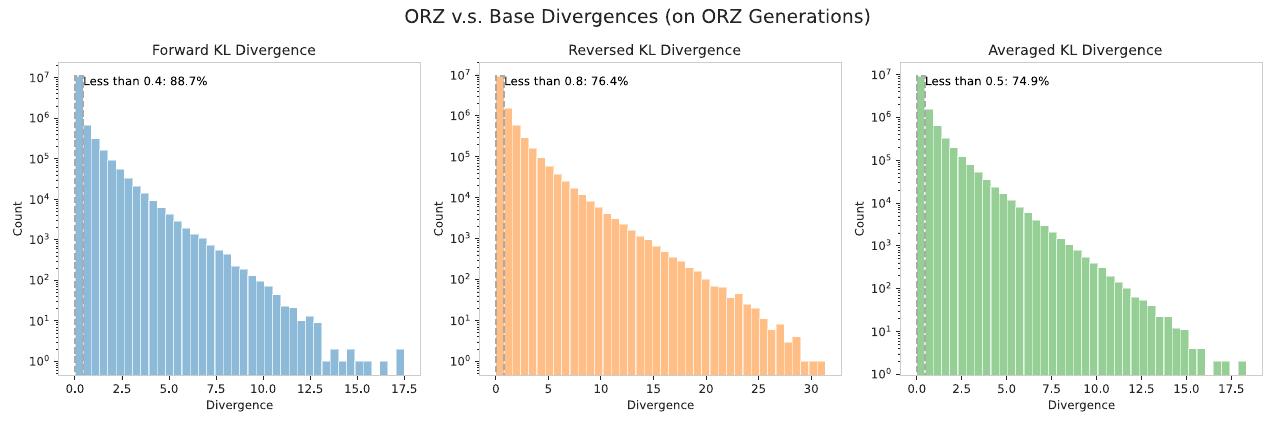}
        \caption{Divergence on ORZ's generations}
    \end{subfigure}
    \hfill
    \begin{subfigure}[c]{0.38\textwidth}
        \centering
        \includegraphics[width=0.9\linewidth]{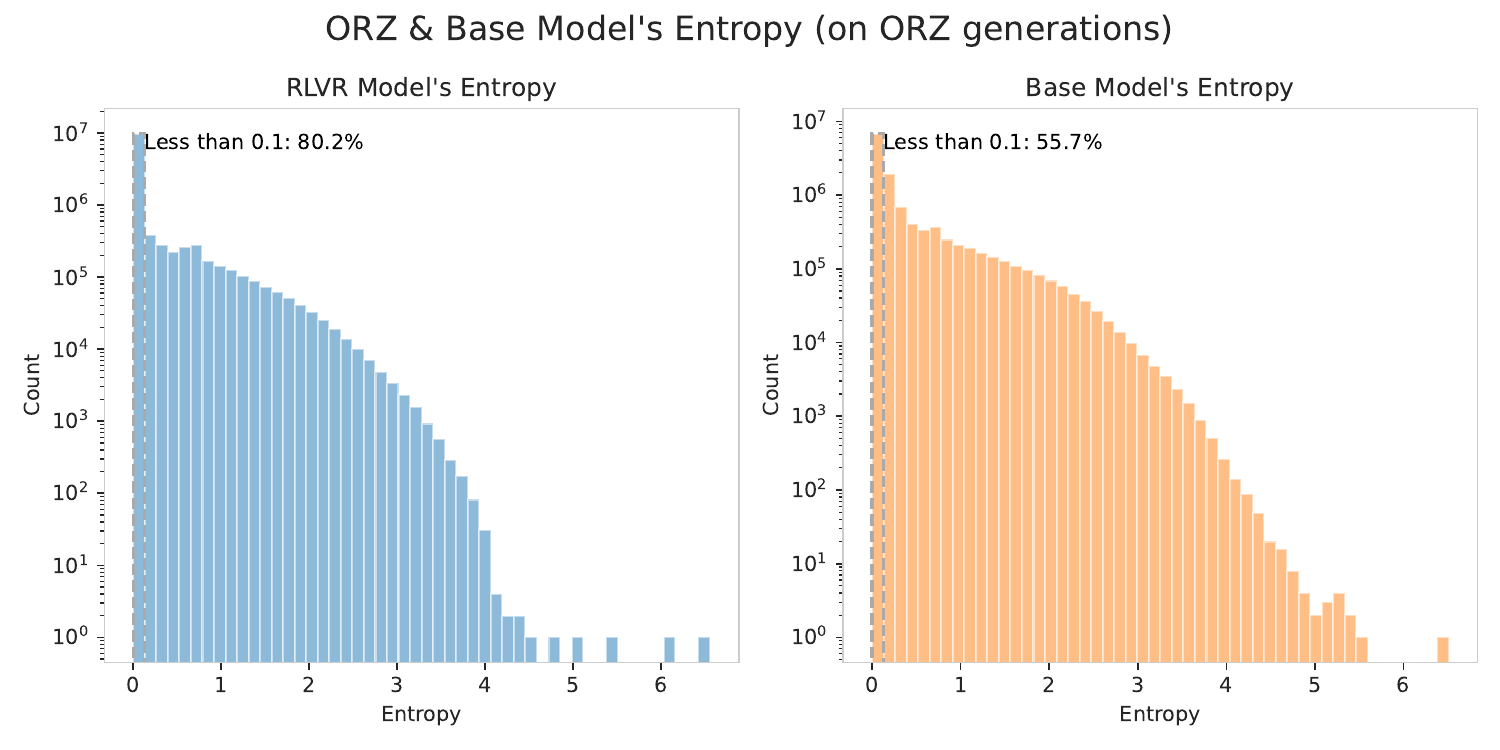}
        \caption{Entropy on ORZ's generations}
    \end{subfigure}
    
    \begin{subfigure}[c]{0.6\textwidth}
        \centering
        \includegraphics[width=0.88\linewidth]{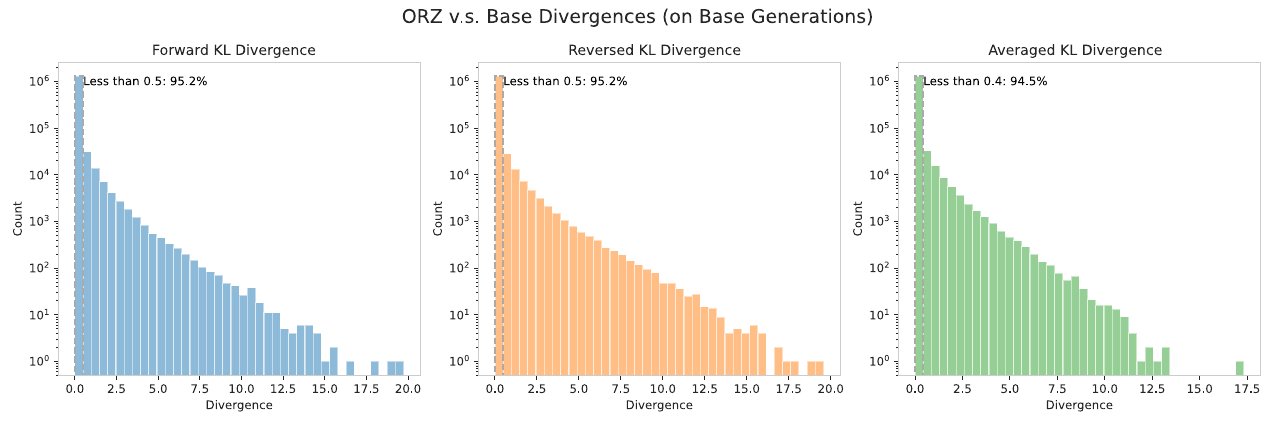}
        \caption{Divergence on base's generations}
    \end{subfigure}
    \hfill
    \begin{subfigure}[c]{0.38\textwidth}
        \centering
        \includegraphics[width=0.9\linewidth]{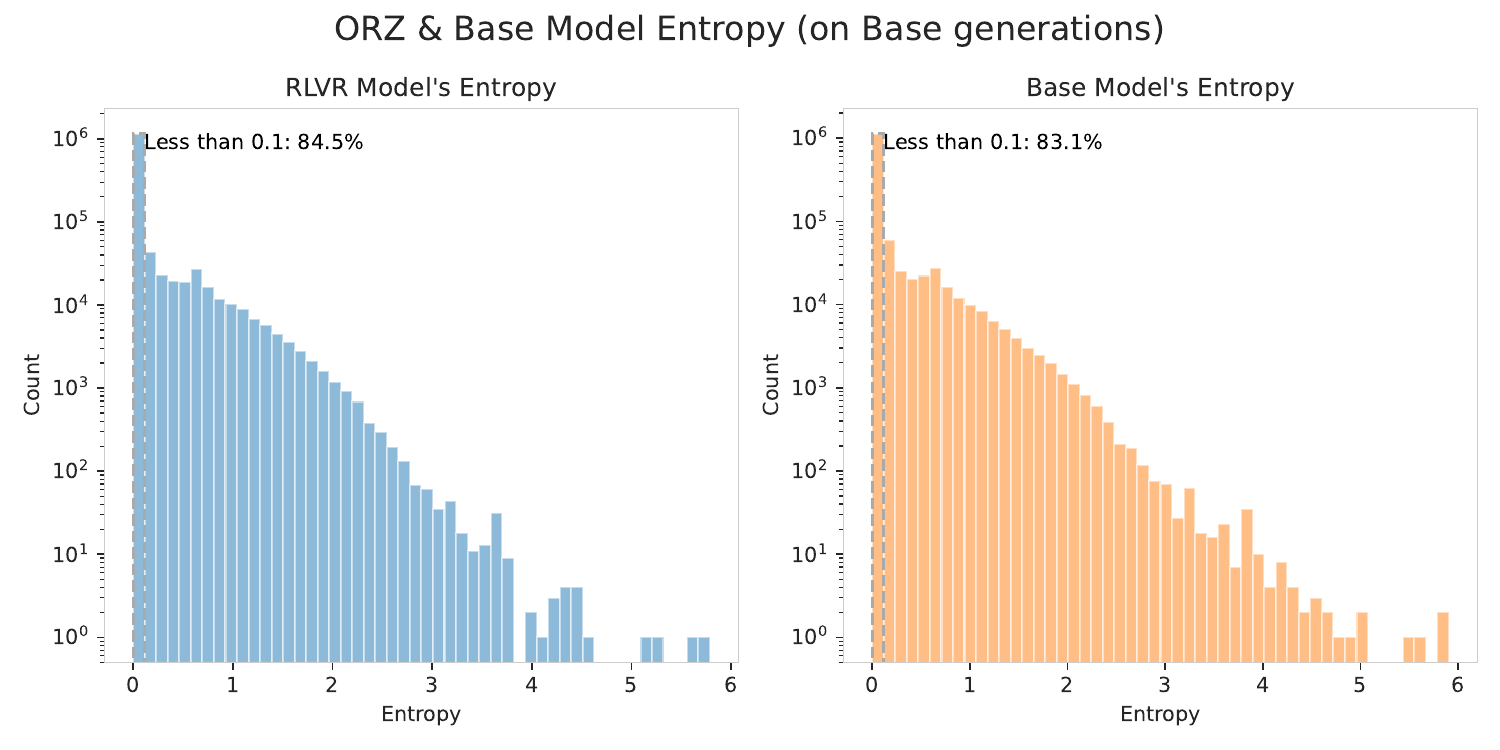}
        \caption{Entropy on base's generations}
    \end{subfigure}

    \caption{Divergence and entropy histograms of ORZ and its corresponding base model measured on ORZ or the base model's generations.}
    \label{fig:hist_orz_magnitudes}
\end{figure}


\end{document}